\begin{document}

%
\setcounter{page}{1}
\publyear{2021}
\papernumber{0001}
\volume{178}
\issue{1}
%

\title{Attribute reduction and rule acquisition of formal decision context based on two new kinds of decision rules}

\address{School of Computing and Artificial Intelligence, Southwest Jiaotong University, Chengdu, Sichuan, 610031, China}

\author{Qian Hu\thanks{This work is supported by the Natural Science Foundation (grants no: 61976130)}\\
School of Computing and Artificial Intelligence\\
Southwest Jiaotong University \\ Chengdu, Sichuan, 610031, China\\
15227193972@163.com
\and Keyun Qin\\
School of Mathematics \\
Southwest Jiaotong University \\ Chengdu, Sichuan, 610031, China\\}

\maketitle

\runninghead{Qian Hu, Keyun Qin}{Attribute reduction and rule acquisition based on two new kinds of decision rules}

\begin{abstract}
  This paper mainly studies the rule acquisition and attribute reduction for formal decision context based on
two new kinds of decision rules, namely I-decision rules and II-decision rules.
The premises of these rules are object-oriented concepts, and the conclusions are formal concept and property-oriented concept respectively.
The rule acquisition algorithms for I-decision rules and II-decision rules are presented.
Some comparative analysis of these algorithms with the existing algorithms are examined which shows that the algorithms presented in this study
behave well.
The attribute reduction approaches to preserve I-decision rules and II-decision rules are presented by using discernibility matrix.
\end{abstract}

\begin{keywords}
Formal concept analysis, object-oriented and property-oriented concept lattice, rule acquisition, attribute reduction
\end{keywords}

\section{Introduction}

German scholar Wille put forward Formal concept analysis (FCA) in 1982 \cite{1}. FCA is a mathematical theory for qualitative analysis
of relation data between object and attribute that uses a formal context as input to identify a set of formal concepts formed in a concept lattice.
A formal context is a binary relation between objects set and attributes set to specify which object possess what attribute. A formal concept consists of two parts (an extent and an intent) by two derivation operators. The extent of a formal concept is an objects subset that are instances of the concept, while the intent is the subset of attributes possessed by the objects. Therefore, formal concepts are the mathematization of philosophical concepts.
As a practical tool for knowledge discovery, FCA has been successfully used in several areas, for instance data mining, information retrieval, social network analysis and machine learning \cite{2,3,4,5,6}.
In addition, some natural generalizations of derivation operators were proposed which induces some notions, for example, object-oriented concepts, property-oriented concepts, formal fuzzy concepts and three-way concepts \cite{7,8,9,10,11}.

Attribute reduction for formal context plays an essential part in FCA. By attribute reduction, more compact knowledge can be discovered
and the computational complexity for constructing concept lattices can be reduced.
In general, an attribute reduction is a minimal attributes subset which preserves some specific properties of formal context. There are mainly two problems involved in attribute reduction: the criterion of reduction with semantic interpretation and reduction computing method.
For formal context, there are two typical criteria of attribute reduction: (1) To preserve the extents set of all formal concepts calculated from the formal context \cite{12,13,14}. In this case, the concept lattice induced from the reduced context and the one derived from the initial context are isomorphic. (2) To preserve the extents set of all object concepts. This kind of attribute reduction is also called granular reduction \cite{15}.
In order to compute attribute reductions, CR (clarification and reduction) method \cite{12,13} and DM (discernibility matrix) method \cite{14} were proposed. CR method is established by using meet-irreducible elements in formal concept lattice, whereas DM method is based on discernibility attributes between related formal concepts. These two reduction methods have been extensively studied and applied to attribute reductions for various kinds of concept lattices \cite{16,17,18,19,20,21,22,23,24}.

A formal decision context(Fdc) is a formal context in which the attributes are consisted of conditional attributes and decision attributes \cite{25,26}. The knowledge associated with a formal decision context is usually expressed as decision rules to revealing the dependency between conditional and decision attributes. A decision rule is an implication in which the premise and conclusion are concepts of conditional context and decision context respectively. The criteria of attribute reduction for formal decision contexts
can be roughly categorized into two groups: to preserve a kind of consistency \cite{15,27,28,29}, and to preserve a specific kind of decision rules \cite{30,31,32,33,34} of Fdcs.
Qin et al \cite{35} made a comparative research on attribute reduction of formal context and Fdc under the framework of local reduction.

We note that the existing approaches on rule acquisition and attribute reduction pay more attention to decision rules generated by
formal concepts and few work has been completed on other types for decision rules.
Theoretically speaking, decision rules can be designed by using formal concepts, object-oriented concepts or property-oriented concepts.
A specific type of decision rules provide a particular kind of decision knowledge.
In this study, we further investigate attribute reduction methods and rule acquisition methods for Fdcs based on
two kinds of decision rules, namely I-decision rule and II-decision rule.
This paper is structured as follows. In Section 2, the basic definitions of FCA such as formal
concept, property-oriented concept and object-oriented concept are concisely recalled.
In Section 3, we propose algorithms for I-decision rule acquisition and make some comparative analysis with the existing algorithms
presented in \cite{33}. In addition, we present attribute reduction method for Fdc to preserve I-decision rules.
In Section 4, the algorithms for II-decision rule acquisition are presented.
We analyze the relationships between I-decision rules and II-decision rules, and accordingly, attribute reduction method to preserve II-decision rules is examined. Section 5 concludes.

\section{Preliminaries}
In this section, some related notions of FCA are introduced to make this paper self-contained. Please refer to
\cite{1,7,8} for details.

\subsection{Formal context and concept lattice}
The input object-attribute relational data are described by a formal context in FCA.

\begin{definition}\cite{1}\label{def1}
A formal context $(U,M,I)$ constitutes by two sets $U$ and $M$, and a binary relation $I\subseteq U\times M$, where $U$ (objects set) and $M$ (attributes set) are both finite nonempty sets. For $x\in U$ and $a\in M$, $(x,a)\in I$ indicates that the
object $x$ possess the attribute $a$.
\end{definition}
In a formal context $\mathfrak{C}=(U,M,I)$, Wille \cite{1} defined two concept forming operators $\uparrow$ and
$\downarrow$ as follows: for $O\subseteq U$, $C\subseteq M$,
\begin{eqnarray}
O^{\uparrow}=\{a\in M| \forall x\in O((x,a)\in I)\}\\
C^{\downarrow}=\{x\in U| \forall a\in C((x,a)\in I)\}
\end{eqnarray}
That is to say, $O^{\uparrow}$ is the maximal attributes set
had by all objects in $O$, and $C^{\downarrow}$ is the maximal set of objects that possess all attributes in $C$.
A formal concept generated by $\mathfrak{C}$ is a pair $(O, C)$ with two sets $O\subseteq U$ and $C\subseteq M$ such that $O^{\uparrow}=C$
and $C^{\downarrow}=O$, where $O$ and $C$ are regarded as the extent and intent of
$(O, C)$ respectively. Denote the family of all formal concepts of $\mathfrak{C}$ by $L(\mathfrak{C})$. $(L(\mathfrak{C}),\leq)$ constitutes a complete lattice \cite{1}, referred to as the concept lattice of $\mathfrak{C}$, where the order relation $\leq$ is given by:
\begin{eqnarray*}
(O_{i}, C_{i})\leq (O_{j}, C_{j})\Leftrightarrow
O_{i}\subseteq O_{j}
(\Leftrightarrow C_{j}\subseteq C_{i})
\end{eqnarray*}
for any $(O_{i}, C_{i}), (O_{j}, C_{j})\in L(\mathfrak{C})$. In addition, the infimum and supremum of $(L(\mathfrak{C}),\leq)$ are defined as follow:
\begin{eqnarray}
\wedge_{q\in Q}(O_{q}, C_{q})=(\cap_{q\in Q}O_{q}, (\cup_{q\in Q}C_{q})^{\downarrow\uparrow})\\
\vee_{q\in Q}(O_{q}, C_{q})=((\cup_{q\in Q}O_{q})^{\uparrow\downarrow}, \cap_{q\in Q}C_{q})
\end{eqnarray}
where $Q$ is an index set and $\{(O_{q},C_{q})|q\in Q\}\subseteq L(\mathfrak{C})$. For $\forall x\in U$ and $ \forall a\in M$, we write $(\{x\})^{\uparrow}$
and $(\{a\})^{\downarrow}$ simply as  $x^{\uparrow}$ and $a^{\downarrow}$ respectively. In addition $\forall O\subseteq U$ and $\forall A\subseteq M$,
$(O^{\uparrow\downarrow},O^{\uparrow})$ and $(C^{\downarrow},C^{\downarrow\uparrow})$ are both formal concepts.
In what follows, $(O^{\uparrow\downarrow},O^{\uparrow})$ and $(C^{\downarrow},C^{\downarrow\uparrow})$ are referred to as the formal concepts generated by $O$ and $C$ respectively.
Customarily, the formal contexts are all assumed to be canonical \cite{14} in the following discussion, i.e., $\forall x\in U$ and $\forall a\in M$ there has $x^{\uparrow}\neq \emptyset$, $x^{\uparrow}\neq M$, $a^{\downarrow}\neq \emptyset$ and $a^{\downarrow}\neq U$.

\subsection{Property (Object) oriented concept lattice}

FCA and rough set theory \cite{36} are two efficaciously and closely connected mathematical tools for dealing with data.
Over the years, much scholars have been trying to contrast and combine these two theories \cite{7,8,9}.
For a formal context $\mathfrak{C}=(U,M,I)$, based on rough approximation operators, Duntsch and Gediga \cite{7} presented a pair of operators $\lozenge:P(U)\rightarrow P(M)$ and $\square:P(M)\rightarrow P(U)$ as below: for any $O\subseteq U$, $C\subseteq M$
\begin{eqnarray}
O^{\lozenge}=\{a\in M| \exists x\in O((x,a)\in I)\}\\
C^{\square}=\{x\in U| \forall a\in M((x,a)\in I\rightarrow a\in C)\}
\end{eqnarray}
These operators are used to construct property-oriented concepts \cite{7}. Similarly, Yao \cite{8,9} considered a pair of
operators $\square:P(U)\rightarrow P(M)$ and $\lozenge:P(M)\rightarrow P(U)$:
\begin{eqnarray}
O^{\square}=\{a\in M|\forall x \in U((x,a)\in I \rightarrow x \in O)\}\\
C^{\lozenge}=\{x\in U; \exists a \in C((x,a)\in I)\}
\end{eqnarray}
where $O\subseteq U$ and $C\subseteq M$.
Modal-style approximate operators and $\uparrow,\downarrow$ are closely related.
Obviously we know $O^{\lozenge}=\{a\in M| a^{\downarrow}\cap O\neq \emptyset\}$,
$O^{\square}=\{a\in M| a^{\downarrow}\subseteq O\}$,
$C^{\lozenge}=\{x\in U| x^{\uparrow}\cap C\neq \emptyset\}$ and
$C^{\square}=\{x\in U| x^{\uparrow}\subseteq C\}$.
In addition, for any $O_{i}, O_{j}, O_{k}\subseteq U$ and $C_{i}, C_{j}, C_{k}\subseteq M$, the following properties hold:
\begin{enumerate}
  \item $O_{j}\subseteq O_{k}\Rightarrow O_{j}^{\lozenge}\subseteq
O_{k}^{\lozenge}, O_{j}^{\square}\subseteq O_{k}^{\square}$;
  \item $C_{j}\subseteq C_{k}\Rightarrow C_{j}^{\lozenge}\subseteq C_{k}^{\lozenge}, C_{j}^{\square}\subseteq C_{k}^{\square}$;
  \item $O_{i}^{\square\lozenge}\subseteq O_{i} \subseteq O_{i}^{\lozenge\square}, C_{i}^{\square \lozenge}\subseteq C_{i}\subseteq C_{i}^{\lozenge\square}$;
  \item $O_{i}^{\lozenge \square \lozenge}=O_{i}^{\lozenge}, O_{i}^{\square\lozenge\square}=O_{i}^{\square}, C_{i}^{\lozenge\square\lozenge}=C_{i}^{\lozenge}, C_{i}^{\square
\lozenge \square}=C_{i}^{\square}$;
  \item $(O_{j}\cup O_{k})^{\lozenge}=O_{j}^{\lozenge}\cup
O_{k}^{\lozenge}, (O_{j}\cap O_{k})^{\square}=O_{j}^{\square}\cap
O_{k}^{\square}$;
  \item $(C_{j}\cup C_{k})^{\lozenge}=C_{j}^{\lozenge}\cup
C_{k}^{\lozenge}, (C_{j}\cap C_{k})^{\square}=C_{j}^{\square}\cap
C_{k}^{\square}$.
\end{enumerate}

We call a pair $(O, C)$ with $O\subseteq U$ and $C\subseteq M$ a property-oriented concept \cite{7} of $\mathfrak{C}$ if $O^{\lozenge}=C$ and
$C^{\square}=O$. Let $L_{P}(\mathfrak{C})=\{(O,C)|O\subseteq U, C\subseteq M, O^{\lozenge}=C, C^{\square}=O\}$ be the family
of all property-oriented concepts of $\mathfrak{C}$.
$(L_{P}(\mathfrak{C}),\leq)$ is a complete lattice \cite{7}, denoted as the property-oriented concept lattice of $\mathfrak{C}$ with the order relation $\leq$ is given by:
\begin{eqnarray*}
(O_{i}, C_{i})\leq (O_{j}, C_{j})\Leftrightarrow
O_{i}\subseteq O_{j}
(\Leftrightarrow C_{i}\subseteq C_{j})
\end{eqnarray*}
for any $(O_{i}, C_{i}), (O_{j}, C_{j})\in L_{P}(\mathfrak{C})$. The infimum and supremum of $(L_{P}(\mathfrak{C}),\leq)$ are defined as follows:
\begin{eqnarray}
\wedge_{q\in Q}(O_{q}, C_{q})=(\cap_{q\in Q}O_{q}, (\cap_{q\in Q}C_{q})^{\square\lozenge})\\
\vee_{q\in Q}(O_{q}, C_{q})=((\cup_{q\in Q}O_{q})^{\lozenge\square}, \cup_{q\in Q}C_{q})
\end{eqnarray}
$\forall O\subseteq U$ and $\forall C\subseteq M$, $(O^{\lozenge\square},O^{\lozenge})$ and $(C^{\square},C^{\square\lozenge})$ are called the property-oriented concepts derived from $O$ and $C$ separately.

Analogously, we call a pair $(O, C)$ with an objects subset $O$ of and a attributes subset $C$ an object-oriented concept \cite{8} of $\mathfrak{C}$ if $O^{\square}=C$ and $C^{\lozenge}=O$. $L_{O}(\mathfrak{C})$ is referred as the family of all
object-oriented concepts.
$(L_{O}(\mathfrak{C}),\leq)$ is a complete lattice, where the order relation is given by
$(Y_{i},D_{i})\leq (Y_{j},D_{j})\Leftrightarrow Y_{i}\subseteq Y_{j}
(\Leftrightarrow D_{i}\subseteq D_{j})$ and is called the object-oriented concept lattice of $\mathfrak{C}$.
The meet and join of $(L_{O}(\mathfrak{C}),\leq)$ are given by \cite{8}:
\begin{eqnarray}
\wedge_{q\in Q}(O_{q}, C_{q})=(\cap_{q\in Q}O_{q})^{\square\lozenge}, \cap_{q\in Q}C_{q})\\
\vee_{q\in Q}(O_{q}, C_{q})=(\cup_{q\in Q}O_{q}, (\cup_{q\in Q}C_{q})^{\lozenge\square})
\end{eqnarray}
Obviously, $\forall O\subseteq U$ and $\forall C\subseteq M$, $(O^{\square\lozenge},O^{\square})$ and $(C^{\lozenge},C^{\lozenge\square})$ are object-oriented concepts. They are said to be the object-oriented concepts derived from $O$ and $C$ separately.

\section{I-decision rules acquisition and related attribute reduction}

A formal decision context (Fdc) is a formal context in which the attributes are consisted of
conditional attributes and decision attributes.

\begin{definition}\cite{25,26}\label{def1}
A Fdc is a quintuple $\mathfrak{C}=(U,M,I,N,J)$ with two formal contexts $(U,M,I)$ and $(U,N,J)$, $M$ and $N$ are regarded as the sets of conditional attributes and decision attributes respectively with $M\cap N=\emptyset$.
\end{definition}

In addition for a Fdc $\mathfrak{C}=(U,M,I,N,J)$, $(U,M,I)$ and $(U,N,J)$ are called conditional context and decision context of $\mathfrak{C}$ and denoted by
$\mathfrak{C}_{M}=(U,M,I)$ and $\mathfrak{C}_{N}=(U,N,J)$ respectively. In order to distinguish, these operators given by
(1), (2), (5), (6), (7) and (8) for $\mathfrak{C}_{M}$ will be rewrited as $\uparrow_{M}$, $\downarrow_{M}$, $\lozenge_{M}$ and $\square_{M}$,
whereas these operators for $\mathfrak{C}_{N}$ will be denoted by $\uparrow_{N}$, $\downarrow_{N}$, $\lozenge_{N}$ and $\square_{N}$ respectively.

For Fdc $\mathfrak{C}$, we are interested in revealing the dependency relationships between conditional and decision attributes.
It is usually expressed as an implication with the form $(O,C)\rightarrow (Z,D)$ and called decision rule,
where $(O,C)$ and $(Z,D)$ are concepts from $\mathfrak{C}_{M}$ and $\mathfrak{C}_{N}$ respectively.
The rule acquisition and attribute reduction methods with respect to several kinds of decision rules have been extensively investigated, for example:

(1) $(O,C)\rightarrow (Z,D)$: $(O,C)\in L(\mathfrak{C}_{M})$, $(Z,D)\in L(\mathfrak{C}_{N})$, $O\subseteq Z$ and $O,C,Z,D$ are non-empty\cite{29,30,31,32};

(2) $(O,C)\rightarrow (Z,D)$: $(O,C)\in L_{O}(\mathfrak{C}_{M})$, $(Z,D)\in L_{O}(\mathfrak{C}_{N})$, $O\subseteq Z$, $O\neq \emptyset$ and $Z\neq U$ \cite{34};

(3) $(O,C)\rightarrow (Z,D)$: $(O,C)\in L_{P}(\mathfrak{C}_{M})$, $(Z,D)\in L_{P}(\mathfrak{C}_{N})$, $O\subseteq Z$, $O\neq \emptyset$ and $Z\neq U$ \cite{34};

(4) $(O,C)\rightarrow (Z,D)$: $(O,C)\in L_{P}(\mathfrak{C}_{M})$, $(Z,D)\in L(\mathfrak{C}_{N})$, $O\subseteq Z$, $O\neq \emptyset$ and $Z\neq U$ \cite{33};

(5) $(O,C)\rightarrow (Z,D)$: $(O,C)\in L_{O}(\mathfrak{C}_{M})$, $(Z,D)\in L(\mathfrak{C}_{N})$, $O\subseteq Z$, $O\neq \emptyset$ and $Z\neq U$ \cite{33}.

These decision rules are mutually different and present various kinds of decision information.
Ren et al. \cite{33} proposed some rule acquisition algorithms for the decision rule (5).
In this part, we further research the rule acquisition and attribute reduction methods for this kind of decision rules.
We propose new rule acquisition methods and make some comparative study on the rule acquisition algorithms presented in \cite{33} and the
rule acquisition algorithms presented in this paper.
Furthermore, we present related attribute reduction methods.

\subsection{Rule acquisition methods for I-decision rules}

In this subsection, we assume that $\mathfrak{C}=(U,M,I,N,J)$ is a Fdc,
$\mathfrak{C}_{M}=(U,M,I)$ and $\mathfrak{C}_{N}=(U,N,J)$ are the conditional context and decision context of $\mathfrak{C}$ respectively.
The notion of I-decision rules is proposed by Ren et al. \cite{33}. Here we make some modifications on technical terms to fit for this study.

\begin{definition}\cite{33}\label{def1}
Assume that $(O,C)\in L_{O}(\mathfrak{C}_{M})$, $(Y,D)\in
L(\mathfrak{C}_{N})$. If $O\subseteq Y$, $O\neq \emptyset$ and $Y\neq U$, then $(O,C)\rightarrow (Y,D)$ is said to be a
I-decision rule of $\mathfrak{C}$, $(O,C)$ and $(Y,D)$ are the premise and conclusion of $(O,C)\rightarrow (Y,D)$ respectively.
\end{definition}

The semantics of $I$-decision rule $(O,C)\rightarrow (Y,D)$ can be interpreted as follows. By
$C^{\lozenge_{M}}=O\subseteq Y=D^{\downarrow_{N}}$, we know that if an
object $x\in U$ has at least one conditional attribute of $C$, then $x\in C^{\lozenge_{M}}\subseteq D^{\downarrow_{N}}$ and hence $x$ possess all
decision attributes in $D$. In the following, we define
$\mathfrak{R}_{I}(\mathfrak{C})$ as the set of all I-decision rules of $\mathfrak{C}$.

\begin{definition}\label{def1}
Let $(O_{1},C_{1})\rightarrow (Y_{1},D_{1})\in \mathfrak{R}_{I}(\mathfrak{C})$,
$(O_{2},C_{2})\rightarrow (Y_{2},D_{2})\in \mathfrak{R}_{I}(\mathfrak{C})$. If $O_{2}\subseteq
O_{1}\subseteq Y_{1}\subseteq Y_{2}$, then we say that $(O_{2},C_{2})\rightarrow (Y_{2},D_{2})$
can be implied by $(O_{1},C_{1})\rightarrow (Y_{1},D_{1})$ and denoted by $(O_{1},C_{1})\rightarrow (Y_{1},D_{1})\Rightarrow (O_{2},C_{2})\rightarrow (Y_{2},D_{2})$.
\end{definition}

Assume that $(O_{1},C_{1})\rightarrow (Y_{1},D_{1})\Rightarrow (O_{2},C_{2})\rightarrow (Y_{2},D_{2})$.
By $O_{2}\subseteq O_{1}\subseteq Y_{1}\subseteq Y_{2}$, it follows that $C_{2}=O_{2}^{\square_{M}}\subseteq O_{1}^{\square_{M}}=C_{1}$ and $D_{2}=Y_{2}^{\uparrow_{N}}\subseteq Y_{1}^{\uparrow_{N}}=D_{1}$.
If an object $x$ possesses at least one conditional attribute of $C_{2}$, then $x$ possesses at least one conditional attribute of $C_{1}$ and hence
it has all decision attributes in $D_{1}$ by $(O_{1},C_{1})\rightarrow (Y_{1},D_{1})\in \mathfrak{R}_{I}(\mathfrak{C})$.
Consequently, $x$ possess all decision attributes in $D_{2}$ by $D_{2}\subseteq D_{1}$.
We conclude that the decision
information associated with $(O_{2},C_{2})\rightarrow (Y_{2},D_{2})$ can be inferred
from that associated with $(O_{1},C_{1})\rightarrow (Y_{1},D_{1})$.

For $(O,C)\rightarrow (Y,D)\in \mathfrak{R}_{I}(\mathfrak{C})$,
if there exists $(O_{1},C_{1})\rightarrow (Y_{1},D_{1})\in
\mathfrak{R}_{I}(\mathfrak{C})$ such that
$(O_{1},C_{1})\rightarrow (Y_{1},D_{1})\Rightarrow (O,C)\rightarrow
(Y,D)$ and $(O_{1},C_{1})\rightarrow (Y_{1},D_{1})\neq (O,C)\rightarrow
(Y,D)$ (i.e., $(O_{1},C_{1})\neq (O,C)$ or $(Y_{1},D_{1})\neq (Y,D)$), then we call $(O,C)\rightarrow (Y,D)$ is redundant in
$\mathfrak{R}_{I}(\mathfrak{C})$. Otherwise, $(O,C)\rightarrow (Y,D)$ is called
a necessary I-decision rule of $\mathfrak{C}$. Clearly, necessary rules
are more significant than redundant rules. We regard $\mathfrak{\overline{R}}_{I}(\mathfrak{C})$ as the set of all necessary
I-decision rules.

\begin{theorem}\label{thm4}

(1) $(\mathfrak{R}_{I}(\mathfrak{C}),\Rightarrow)$ is a partially ordered set, i.e.,
rule implication relation $\Rightarrow$ satisfies:

a) Reflexivity: $r\Rightarrow r$ for each $r\in \mathfrak{R}_{I}(\mathfrak{C})$;

b) Anti-symmetry: $r_{1}\Rightarrow r_{2}$ and $r_{2}\Rightarrow r_{1}$ imply $r_{1}=r_{2}$ for any $r_{1},r_{2}\in \mathfrak{R}_{I}(\mathfrak{C})$;

c) Transitivity: $r_{1}\Rightarrow r_{2}$ and $r_{2}\Rightarrow r_{3}$ imply $r_{1}\Rightarrow r_{3}$ for any $r_{1},r_{2},r_{3}\in \mathfrak{R}_{I}(\mathfrak{C})$.

(2) $r\in \mathfrak{\overline{R}}_{I}(\mathfrak{C})$ iff $r^{\prime}\Rightarrow r$ implies $r^{\prime}=r$ for any $r^{\prime}\in \mathfrak{R}_{I}(\mathfrak{C})$ in the sense that $r$ is a minimal element of $(\mathfrak{R}_{I}(\mathfrak{C}),\Rightarrow)$.

(3) If $O\in ExtL_{O}(\mathfrak{C}_{M})\cap ExtL(\mathfrak{C}_{N})$, $O\neq \emptyset$ and $O\neq U$, then $(O,O^{\square_{M}})\in L_{O}(\mathfrak{C}_{M})$,
$(O,O^{\uparrow_{N}})\in L(\mathfrak{C}_{N})$ and $(O,O^{\square_{M}})\rightarrow (O,O^{\uparrow_{N}})$
is a necessary I-decision rule.
\end{theorem}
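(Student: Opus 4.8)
The plan is to take the three parts in turn; each one reduces, ultimately, to unwinding the definition of the rule-implication relation $\Rightarrow$ together with the observation that in $L_{O}(\mathfrak{C}_{M})$ and in $L(\mathfrak{C}_{N})$ a concept is determined by its extent. For (1), reflexivity is immediate from $O\subseteq O\subseteq Y\subseteq Y$. For anti-symmetry, writing each $r_{k}$ as $(O_{k},C_{k})\rightarrow(Y_{k},D_{k})$, the hypotheses $r_{1}\Rightarrow r_{2}$ and $r_{2}\Rightarrow r_{1}$ give the chains $O_{2}\subseteq O_{1}\subseteq Y_{1}\subseteq Y_{2}$ and $O_{1}\subseteq O_{2}\subseteq Y_{2}\subseteq Y_{1}$, so $O_{1}=O_{2}$ and $Y_{1}=Y_{2}$; since $C_{k}=O_{k}^{\square_{M}}$ and $D_{k}=Y_{k}^{\uparrow_{N}}$, this forces $C_{1}=C_{2}$ and $D_{1}=D_{2}$, so $r_{1}=r_{2}$. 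Transitivity follows by concatenating $O_{2}\subseteq O_{1}\subseteq Y_{1}\subseteq Y_{2}$ with $O_{3}\subseteq O_{2}\subseteq Y_{2}\subseteq Y_{3}$ and reading off $O_{3}\subseteq O_{1}\subseteq Y_{1}\subseteq Y_{3}$. For (2), I would simply note that, by the definition preceding the theorem, $r$ is redundant in $\mathfrak{R}_{I}(\mathfrak{C})$ exactly when some $r'\neq r$ in $\mathfrak{R}_{I}(\mathfrak{C})$ satisfies $r'\Rightarrow r$; negating this, $r\in\mathfrak{\overline{R}}_{I}(\mathfrak{C})$ iff $r'\Rightarrow r$ implies $r'=r$, which by (1) is precisely the assertion that $r$ is a minimal element of $(\mathfrak{R}_{I}(\mathfrak{C}),\Rightarrow)$.

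For (3), since $O\in ExtL_{O}(\mathfrak{C}_{M})$ there is an object-oriented concept $(O,C)$ of $\mathfrak{C}_{M}$; then $C=O^{\square_{M}}$ and $O=C^{\lozenge_{M}}=O^{\square_{M}\lozenge_{M}}$, so $(O,O^{\square_{M}})\in L_{O}(\mathfrak{C}_{M})$. Dually, $O\in ExtL(\mathfrak{C}_{N})$ gives $O=O^{\uparrow_{N}\downarrow_{N}}$, so $(O,O^{\uparrow_{N}})\in L(\mathfrak{C}_{N})$. Because $O\subseteq O$, $O\neq\emptyset$ and $O\neq U$, the definition of I-decision rule makes $(O,O^{\square_{M}})\rightarrow(O,O^{\uparrow_{N}})$ an I-decision rule of $\mathfrak{C}$. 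To see it is necessary, take any $r'=(O',C')\rightarrow(Y',D')\in\mathfrak{R}_{I}(\mathfrak{C})$ with $r'\Rightarrow(O,O^{\square_{M}})\rightarrow(O,O^{\uparrow_{N}})$; then $O\subseteq O'\subseteq Y'\subseteq O$, so $O'=Y'=O$, and, using once more that the extent determines the concept, $C'=O^{\square_{M}}$ and $D'=O^{\uparrow_{N}}$, so $r'$ is the rule itself. By (2) it is therefore a necessary I-decision rule.

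The only point that needs a little care, and the natural candidate for the main obstacle, is the step used in both (1) and (3) that passes from coincidence of extents to coincidence of entire rules: one must recall that in an object-oriented concept of $\mathfrak{C}_{M}$ the intent equals $O^{\square_{M}}$, while in a formal concept of $\mathfrak{C}_{N}$ the intent equals $Y^{\uparrow_{N}}$, so the pair of extents of a rule already determines the rule, and hence the triple inclusion $O\subseteq O'\subseteq Y'\subseteq Y$ defining $\Rightarrow$ collapses to equality exactly when $O'=O$ and $Y'=Y$. Everything else is direct substitution into the definitions, using only the elementary properties of $\square$, $\lozenge$, $\uparrow$ and $\downarrow$ recalled in Section~2.
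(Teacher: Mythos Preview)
Your proposal is correct and follows exactly the approach the paper implicitly has in mind; in fact the paper omits the proof entirely, stating only that it ``is simple and obvious,'' and your write-up supplies precisely the routine verifications (extent-determines-concept for anti-symmetry and for (3), plus the direct unwinding of the definition of redundancy for (2)) that justify this remark.
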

The proof of this Theorem is simple and obvious. We now study the method of necessary I-decision rule acquisition.
Intuitively speaking, a decision rule $(O,C)\rightarrow (Y,D)\in \mathfrak{R}_{I}(\mathfrak{C})$ is necessary if, under the
condition of $O\subseteq Y$, $O$ is as large as possible and $Y$ is as small as possible.
If $(O,C)$ is given, since $O\subseteq O^{\downarrow_{N}\uparrow_{N}}\subseteq Y^{\downarrow_{N}\uparrow_{N}}=Y$, then we have $O^{\downarrow_{N}\uparrow_{N}}$ is the smallest $Y$ such that $O\subseteq Y$ and $Y\in ExtL(\mathfrak{C}_{N})$.
We note that different extents in $ExtL_{O}(\mathfrak{C}_{M})$ may generate same concepts in $L(\mathfrak{C}_{N})$. Therefore,
the object-oriented concepts in $L_{O}(\mathfrak{C}_{M})$ need to be classified.
Let $R_{1}$ be a binary relation on
$ExtL_{O}(\mathfrak{C}_{M})=\{O\subseteq U|\exists C\subseteq M((O,C)\in L_{O}(\mathfrak{C}_{M}))\}$
given by:

\begin{eqnarray}
R_{1}=\{(O,Y)\in ExtL_{O}(\mathfrak{C}_{M})\times ExtL_{O}(\mathfrak{C}_{M})|
O^{\uparrow_{N}}=Y^{\uparrow_{N}}\}
\end{eqnarray}

In other words, $(O,Y)\in R_{1}$ is equivalent to $O$ and $Y$ are extents
in $ExtL_{O}(\mathfrak{C}_{M})$ and they generate same formal concepts in $L(\mathfrak{C}_{N})$.
Clearly, $R_{1}$ is an equivalence relation on $ExtL_{O}(\mathfrak{C}_{M})$.
In the following, we denote by $[O]_{R_{1}}$ the
equivalence class based on $R_{1}$ for $O\in ExtL_{O}(\mathfrak{C}_{M})$. The following theorem
presents an approach to derive necessary I-decision rules.

\begin{theorem}\label{thm4}
For a Fdc $\mathfrak{C}=(U,M,I,N,J)$, we have
\begin{eqnarray}
\mathfrak{\overline{R}}_{I}(\mathfrak{C})=\{(\cup [O]_{R_{1}},
(\cup [O]_{R_{1}})^{\square_{M}})\rightarrow (O^{\uparrow_{N}\downarrow_{N}},O^{\uparrow_{N}})|O\in
ExtL_{O}(\mathfrak{C}_{M}), O\neq \emptyset, O^{\uparrow_{N}\downarrow_{N}}\neq U\}
\end{eqnarray}
\end{theorem}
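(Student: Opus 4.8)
The plan is to prove the set equality in two directions, using the characterization from Theorem~\ref{thm4}(2) that necessary I-decision rules are exactly the minimal elements of $(\mathfrak{R}_I(\mathfrak{C}),\Rightarrow)$.

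First I would show that each rule on the right-hand side actually lies in $\mathfrak{R}_I(\mathfrak{C})$ and is necessary. Given $O\in ExtL_O(\mathfrak{C}_M)$ with $O\neq\emptyset$ and $O^{\uparrow_N\downarrow_N}\neq U$, set $O'=\cup[O]_{R_1}$. Since $R_1$-equivalent extents all have the same image under $\uparrow_N$, and using property (5)/(6) (that $\lozenge$ distributes over unions) together with the fact that $\square\lozenge\square=\square$, I would verify that $O'\in ExtL_O(\mathfrak{C}_M)$ and $O'^{\uparrow_N}=O^{\uparrow_N}$, hence $O'$ is the $\subseteq$-largest element of $[O]_{R_1}$ and $(O',O'^{\square_M})\in L_O(\mathfrak{C}_M)$. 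On the decision side, $(O^{\uparrow_N\downarrow_N},O^{\uparrow_N})\in L(\mathfrak{C}_N)$ by the standard remark that $(C^{\downarrow},C^{\downarrow\uparrow})$ — here applied to $C=O^{\uparrow_N}$ — is a formal concept, and $O'\subseteq O'^{\uparrow_N\downarrow_N}=O^{\uparrow_N\downarrow_N}$. Non-emptiness of $O'$ follows from $O\subseteq O'$, and $O^{\uparrow_N\downarrow_N}\neq U$ is assumed, so the rule belongs to $\mathfrak{R}_I(\mathfrak{C})$. For necessity I would invoke the intuition already stated in the text: if $(O_1,C_1)\rightarrow(Y_1,D_1)\Rightarrow (O',O'^{\square_M})\rightarrow(O^{\uparrow_N\downarrow_N},O^{\uparrow_N})$, then $O'\subseteq O_1\subseteq Y_1\subseteq O^{\uparrow_N\downarrow_N}$; since $O_1\in ExtL_O(\mathfrak{C}_M)$ and $O'$ is already the largest extent with that $\uparrow_N$-image, while $O^{\uparrow_N\downarrow_N}$ is the smallest extent of $L(\mathfrak{C}_N)$ containing $O'$, both inclusions collapse to equalities, forcing $(O_1,C_1)=(O',O'^{\square_M})$ and $(Y_1,D_1)=(O^{\uparrow_N\downarrow_N},O^{\uparrow_N})$.

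Conversely I would take an arbitrary $(P,C)\rightarrow(Y,D)\in\mathfrak{\overline{R}}_I(\mathfrak{C})$ and produce a representative $O$ on the right-hand side generating it. Here $(P,C)\in L_O(\mathfrak{C}_M)$ with $P\neq\emptyset$, $Y\neq U$, $P\subseteq Y$, and $C=P^{\square_M}$, $D=Y^{\uparrow_N}$. Since $P\subseteq P^{\uparrow_N\downarrow_N}\subseteq Y^{\downarrow_N\uparrow_N}\cdots$ — more precisely $P\subseteq Y$ gives $P^{\uparrow_N\downarrow_N}\subseteq Y^{\uparrow_N\downarrow_N}=Y$ — the extent $P^{\uparrow_N\downarrow_N}$ is the smallest extent of $L(\mathfrak{C}_N)$ containing $P$; minimality of the rule forces $Y=P^{\uparrow_N\downarrow_N}$ (otherwise $(P,C)\rightarrow(P^{\uparrow_N\downarrow_N},P^{\uparrow_N\downarrow_N\uparrow_N})$ would strictly imply it). Similarly, $P$ must equal $\cup[P]_{R_1}$: otherwise the larger extent $\cup[P]_{R_1}$ (which has the same $\uparrow_N$-image as $P$ by definition of $R_1$) yields a rule strictly implying $(P,C)\rightarrow(Y,D)$. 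Taking $O=P$ then shows the rule has exactly the required form, and the side conditions $O\neq\emptyset$, $O^{\uparrow_N\downarrow_N}=Y\neq U$ hold.

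The main obstacle I expect is the verification that $\cup[O]_{R_1}$ is again an object-oriented-concept extent and shares the $\uparrow_N$-image of $O$ — i.e., that the union of an $R_1$-class is itself in $ExtL_O(\mathfrak{C}_M)$ and is $R_1$-equivalent to its members. This is where one genuinely uses structural facts about $L_O(\mathfrak{C}_M)$: object-oriented extents are closed under arbitrary unions (from the join formula $\vee_q(O_q,C_q)=(\cup_q O_q,(\cup_q C_q)^{\lozenge\square})$), so $\cup[O]_{R_1}\in ExtL_O(\mathfrak{C}_M)$; and since $\uparrow_N$ need not distribute over unions, the equality $(\cup[O]_{R_1})^{\uparrow_N}=O^{\uparrow_N}$ must be argued via $O^{\uparrow_N\downarrow_N}$ being a common upper bound in $L(\mathfrak{C}_N)$ for all class members — every $O''\in[O]_{R_1}$ satisfies $O''\subseteq O''^{\uparrow_N\downarrow_N}=O^{\uparrow_N\downarrow_N}$, hence $\cup[O]_{R_1}\subseteq O^{\uparrow_N\downarrow_N}$, giving $(\cup[O]_{R_1})^{\uparrow_N}\subseteq O^{\uparrow_N}$, and the reverse inclusion is monotonicity. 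Once this lemma is in hand, both inclusions of the theorem are bookkeeping against Theorem~\ref{thm4}(2).
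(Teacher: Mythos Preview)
Your proposal is correct and follows essentially the same two-inclusion argument as the paper: first show each rule on the right is a necessary I-decision rule (via the key lemma that $\cup[O]_{R_1}$ is again an object-oriented extent with the same $\uparrow_N$-image as $O$), then show every necessary rule is implied by, hence equal to, the rule built from its own premise. One minor slip to fix: from $\cup[O]_{R_1}\subseteq O^{\uparrow_N\downarrow_N}$ the antitonicity of $\uparrow_N$ gives $(\cup[O]_{R_1})^{\uparrow_N}\supseteq O^{\uparrow_N}$ (not $\subseteq$), with the reverse inclusion coming from $O\subseteq\cup[O]_{R_1}$; the paper obtains this equality more directly via $(\cup_{Z\in[O]_{R_1}}Z)^{\uparrow_N}=\bigcap_{Z\in[O]_{R_1}}Z^{\uparrow_N}=O^{\uparrow_N}$.
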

\begin{proof}
(1) Let $H=\{(\cup [O]_{R_{1}},
(\cup [O]_{R_{1}})^{\square_{M}})\rightarrow (O^{\uparrow_{N}\downarrow_{N}},O^{\uparrow_{N}})|O\in
ExtL_{O}(\mathfrak{C}_{M}), O\neq \emptyset, O^{\uparrow_{N}\downarrow_{N}}\neq U\}$.
We firstly prove that $H\subseteq \mathfrak{\overline{R}}_{I}(\mathfrak{C})$,
i.e.,
$(\cup [O]_{R_{1}},
(\cup [O]_{R_{1}})^{\square_{M}})\rightarrow (O^{\uparrow_{N}\downarrow_{N}},O^{\uparrow_{N}})$ is a necessary I-decision rule for any $O\in ExtL_{O}(\mathfrak{C}_{M})$ with $O\neq \emptyset$ and $O^{\uparrow_{N}\downarrow_{N}}\neq U$.
In fact, For any $Z\in [O]_{R_{1}}$, it follows that $(Z,Z^{\square_{M}})\in L_{O}(\mathfrak{C}_{M})$.
According to formula $(12)$, the supremum of $\{(Z,Z^{\square_{M}})|Z\in [O]_{R_{1}}\}$ in $L_{O}(\mathfrak{C}_{M})$ is given by:
\begin{eqnarray*}
\vee_{Z\in [O]_{R_{1}}}(Z,Z^{\square_{M}})=(\cup [O]_{R_{1}},
(\cup_{Z\in [O]_{R_{1}}} Z^{\square_{M}})^{\lozenge_{M}\square_{M}})
\end{eqnarray*}
Consequently, $\cup [O]_{R_{1}}\in ExtL_{O}(\mathfrak{C}_{M})$. Additionally, $(\cup [O]_{R_{1}})^{\uparrow_{N}}=\cap_{Z\in [O]_{R_{1}}}Z^{\uparrow_{N}}=O^{\uparrow_{N}}$ and therefore $\cup [O]_{R_{1}}\in [O]_{R_{1}}$ is the maximum element in $[O]_{R_{1}}$.
 We have $\cup [O]_{R_{1}}\neq \emptyset$ from $O\neq \emptyset$. By combining that facts
$\cup [O]_{R_{1}}\subseteq (\cup [O]_{R_{1}})^{\uparrow_{N}\downarrow_{N}}=O^{\uparrow_{N}\downarrow_{N}}$, $O^{\uparrow_{N}\downarrow_{N}}\neq U$
and $\cup [O]_{R_{1}}\neq \emptyset$, we conclude
$(\cup [O]_{R_{1}},
(\cup [O]_{R_{1}})^{\square_{M}})\rightarrow (O^{\uparrow_{N}\downarrow_{N}},O^{\uparrow_{N}})$
is indeed a I-decision rule.

Suppose that $(O_{1},C_{1})\rightarrow (Y_{1},D_{1})\Rightarrow
(\cup [O]_{R_{1}},
(\cup [O]_{R_{1}})^{\square_{M}})\rightarrow (O^{\uparrow_{N}\downarrow_{N}},O^{\uparrow_{N}})$ where $(O_{1},C_{1})\rightarrow (Y_{1},D_{1})\in
\mathfrak{R}_{I}(\mathfrak{C})$.
It follows that $(O_{1},C_{1})\in L_{O}(\mathfrak{C}_{M})$, $(Y_{1},D_{1})\in L(\mathfrak{C}_{N})$ and
$\cup [O]_{R_{1}}\subseteq O_{1}\subseteq Y_{1}\subseteq O^{\uparrow_{N}\downarrow_{N}}$.
By $O\subseteq \cup [O]_{R_{1}}\subseteq Y_{1}\subseteq O^{\uparrow_{N}\downarrow_{N}}$ it can be known
$O^{\uparrow_{N}\downarrow_{N}}\subseteq Y_{1}^{\uparrow_{N}\downarrow_{N}}=Y_{1}\subseteq O^{\uparrow_{N}\downarrow_{N}}$.
Consequently $Y_{1}=O^{\uparrow_{N}\downarrow_{N}}$ and hence $(Y_{1},D_{1})=(O^{\uparrow_{N}\downarrow_{N}},O^{\uparrow_{N}})$.
In addition, by $O\subseteq \cup [O]_{R_{1}}\subseteq O_{1}\subseteq O^{\uparrow_{N}\downarrow_{N}}$ we have
$O^{\uparrow_{N}\downarrow_{N}}\subseteq O_{1}^{\uparrow_{N}\downarrow_{N}}\subseteq O^{\uparrow_{N}\downarrow_{N}\uparrow_{N}\downarrow_{N}}=O^{\uparrow_{N}\downarrow_{N}}$, it follows
$O^{\uparrow_{N}\downarrow_{N}}=O_{1}^{\uparrow_{N}\downarrow_{N}}$. Consequently, we know
$O_{1}\in [O]_{R_{1}}$ and $O_{1}\subseteq \cup[O]_{R_{1}}$.
Thus $O_{1}=\cup[O]_{R_{1}}$ and $(O_{1},C_{1})=(\cup [O]_{R_{1}},
(\cup [O]_{R_{1}})^{\square_{N}})$.
We can conclude that $(O_{1},C_{1})\rightarrow (Y_{1},D_{1})=(\cup [O]_{R_{1}},
(\cup [O]_{R_{1}})^{\square_{M}})\rightarrow (O^{\uparrow_{N}\downarrow_{N}},O^{\uparrow_{N}})$
and $(\cup [O]_{R_{1}},
(\cup [O]_{R_{1}})^{\square_{M}})\rightarrow (O^{\uparrow_{N}\downarrow_{N}},O^{\uparrow_{N}})$ is a necessary I-decision rule.

(2) Secondly, we prove that $\mathfrak{\overline{R}}_{I}(\mathfrak{C})\subseteq H$.
Suppose that $(O,C)\rightarrow (Y,D)$ is a necessary I-decision rule.
By (1) we have $(\cup [O]_{R_{1}},
(\cup [O]_{R_{1}})^{\square_{M}})\rightarrow (O^{\uparrow_{N}\downarrow_{N}},O^{\uparrow_{N}})$ is a I-decision rule.
By $O\subseteq Y$ we obtain $O^{\uparrow_{N}\downarrow_{N}}\subseteq Y^{\uparrow_{N}\downarrow_{N}}=Y$ and therefore
$(\cup [O]_{R_{1}},
(\cup [O]_{R_{1}})^{\square_{M}})\rightarrow (O^{\uparrow_{N}\downarrow_{N}},O^{\uparrow_{N}})\Rightarrow (O,C)\rightarrow (Y,D)$.
From $(O,C)\rightarrow (Y,D)$
is necessary, we can conclude $(O,C)\rightarrow (Y,D)=(\cup [O]_{R_{1}},
(\cup [O]_{R_{1}})^{\square_{M}})\rightarrow (O^{\uparrow_{N}\downarrow_{N}},O^{\uparrow_{N}})$.
Consequently $\mathfrak{\overline{R}}_{I}(\mathfrak{C})\subseteq H$ as required.
\end{proof}

In what follows, $(\cup [O]_{R_{1}},
(\cup [O]_{R_{1}})^{\square_{M}})\rightarrow (O^{\uparrow_{N}\downarrow_{N}},O^{\uparrow_{N}})$ is called the necessary I-decision rule generated by $O$. By using Theorem 2 we propose Algorithm 1 to compute necessary I-decision rules.

\begin{algorithm}
\caption{Acquisition of necessary I-decision rules}
\begin{algorithmic}

\State \textbf{Input}: A Fdc $\mathfrak{C}=(U,M,I,N,J)$.

\State \textbf{Output}: $\mathfrak{\overline{R}}_{I}(\mathfrak{C})$// the set of necessary I-decision rules.


\State 1) Construct the object-oriented concept lattices $L_{O}(\mathfrak{C}_{M})$

\State 2) Compute $R_{1}$ by using formula (13)

\State 3) Compute equivalence class $[O]_{R_{1}}$ for each $O\in ExtL_{O}(\mathfrak{C}_{M})$

\State 4) Compute $\mathfrak{\overline{R}}_{I}(\mathfrak{C})$ via applying Theorem 3.4.

\State 5) Output $\mathfrak{\overline{R}}_{I}(\mathfrak{C})$

\end{algorithmic}
\end{algorithm}

Theorem 3.5 ensures the validity of Algorithm 1. Then we analyze its time complexity. If
$L_{O}(\mathfrak{C}_{M})$ is constructed via the algorithms proposed by Outrata and Vychodil \cite{37},
then the running time of Step 1 to construct $L_{O}(\mathfrak{C}_{M})$ is $O(|U||M|^{2}|L_{O}(\mathfrak{C}_{M})|)$.
Running Steps 2-5 takes
$O(|L(\mathfrak{C}_{N})|(|U||M|+|L_{O}(\mathfrak{C}_{M})|))$ in a worst-case. To summary, the global running time is at most
$O(|U||M||L(\mathfrak{C}_{N})|+|L_{O}(\mathfrak{C}_{M})|(|U||M|^{2}+|L(\mathfrak{C}_{N})|))$.

\begin{example}\label{exmp1}
Let us consider a Fdc
$\mathfrak{C}=(U,M,I,N,J)$ presented by Table 1, where $U=\{1,2,3,4,5\}$ is objects set, $M=\{a,b,c,d,e,f\}$ is conditional attributes set,
and $N=\{d_{1},d_{2},d_{3}\}$ is decision attributes set.
The value in Table 1 is $\times$ represents the homologous object possesses the homologous attribute, while not have otherwise.
\begin{table}[htbp]
\centering
\caption{A Fdc} {\begin{tabular}{lccccccccc}
  \hline
&$a$ & $b$ & $c$& $d$ & $e$& $f$& $d_{1}$& $d_{2}$& $d_{3}$\\
  \hline
   $1$ & $\times$ &  &  &  &  &  & $\times$ &  & \\
   $2$ &  & $\times$ &  & $\times$ &  &  & $\times$ & $\times$ & \\
   $3$ & $\times$ &  & $\times$ &  & $\times$ &  & $\times$ & $\times$ & \\
   $4$ &  & $\times$ &  & $\times$ &  & $\times$ &  & $\times$ & $\times$\\
   $5$ & $\times$ & $\times$ & $\times$ &  &  &  & $\times$ & $\times$ & \\
  \hline
\end{tabular}}
\end{table}

\begin{figure}
\centering
\includegraphics[width=2.2in]{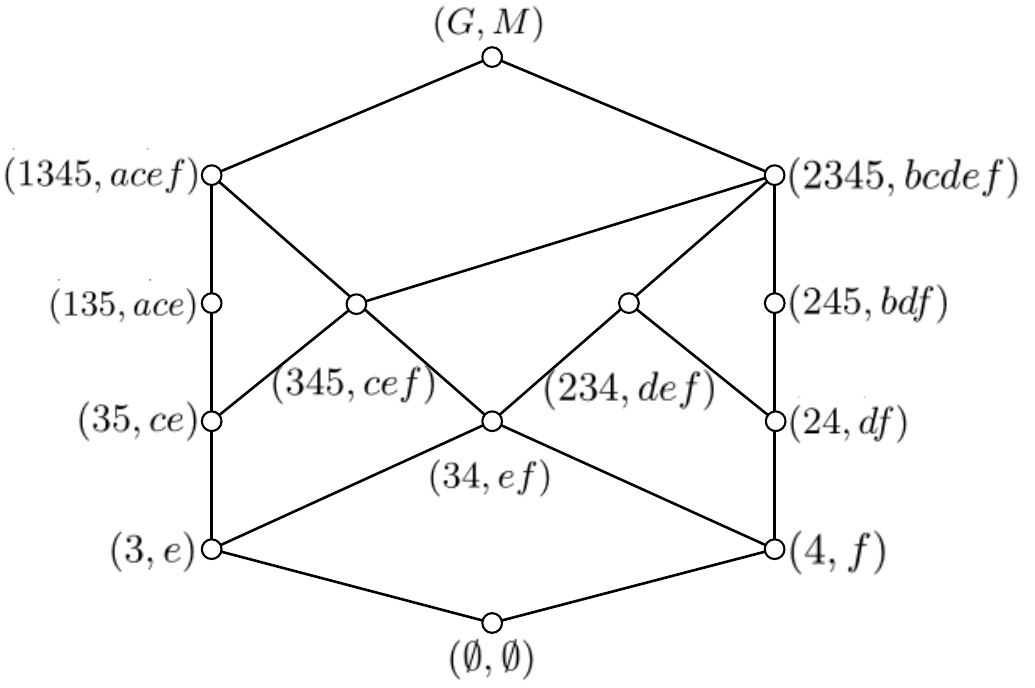}
\caption{Object Oriented  Concept lattice $L_{O}(U, M, I)$}
\small
\label{fig1}
\end{figure}

By direct computation, we have
\begin{eqnarray*}
L_{O}(\mathfrak{C}_{M})=\{(\emptyset, \emptyset), (3,e), (4,f), (24,df),
(34,ef), (35,ce), (135,ace),\\
(234,def),(245,bdf), (345,cef), (1345,acef), (2345,bcdef), (U,M)\}.
\end{eqnarray*}
\begin{eqnarray*}
L(\mathfrak{C}_{N})=\{(\emptyset, N), (4,d_{2}d_{3}), (235,d_{1}d_{2}), (1235,d_{1}),
(2345,d_{2}), (U,\emptyset)\}.
\end{eqnarray*}
Here, for simplicity,
set notion is separator-free, e.g., $245$ substitutes for set $\{2,4,5\}$ and $bdf$ stands for set $\{b,d,f\}$.

The Hasse diagrams of $L_{O}(\mathfrak{C}_{M})$ and $L(\mathfrak{C}_{N})$ are depicted in Fig.1 and Fig.2 respectively.
Additionally, $[\emptyset]_{R_{1}}=\{\emptyset\}$, $[3]_{R_{1}}=\{3,35\}$, $[4]_{R_{1}}=\{4\}$,
$[24]_{R_{1}}=\{24,34,234,245,345,2345\}$, $[135]_{R_{1}}=\{135\}$, $[1345]_{R_{1}}=\{1345, U\}$.
Since $\emptyset^{\uparrow_{N}}=N$, $35^{\uparrow_{N}}=d_{1}d_{2}$, $4^{\uparrow_{N}}=d_{2}d_{3}$,
$2345^{\uparrow_{N}}=d_{2}$, $135^{\uparrow_{N}}=d_{1}$ and $U^{\uparrow_{N}}=\emptyset$,
we have four necessary I-decision rules:

$(r_{1}): (4,f)\rightarrow (4,d_{2}d_{3})$

$(r_{2}): (35,ce)\rightarrow (235,d_{1}d_{2})$

$(r_{3}): (135,ace)\rightarrow (1235,d_{1})$

$(r_{4}): (2345,bcdef)\rightarrow (2345,d_{2})$

We observe that there are fifteen I-decision rules in
$\mathfrak{R}_{I}(\mathfrak{C})$:

$(3,e)\rightarrow (235,d_{1}d_{2}), (3,e)\rightarrow (1235,d_{1}),
(3,e)\rightarrow (2345,d_{1}), (4,f)\rightarrow (4,d_{2}d_{3}),$

$(4,f)\rightarrow (2345,d_{2}), (34,ef)\rightarrow (2345,d_{2}), (24,df)\rightarrow (2345,d_{2}),(35,ce)\rightarrow (235,d_{1}d_{2}),$

$(35,ce)\rightarrow (1235,d_{1}), (35,ce)\rightarrow (2345,d_{2}),(135,ace)\rightarrow (1235,d_{1}), (345,cef)\rightarrow
(2345,d_{2}),$

$(234,def)\rightarrow (2345,d_{2}),(245,bdf)\rightarrow (2345,d_{2}), (2345,bcdef)\rightarrow (2345,d_{2})$.
\begin{figure}
\centering
\includegraphics[width=2.2in]{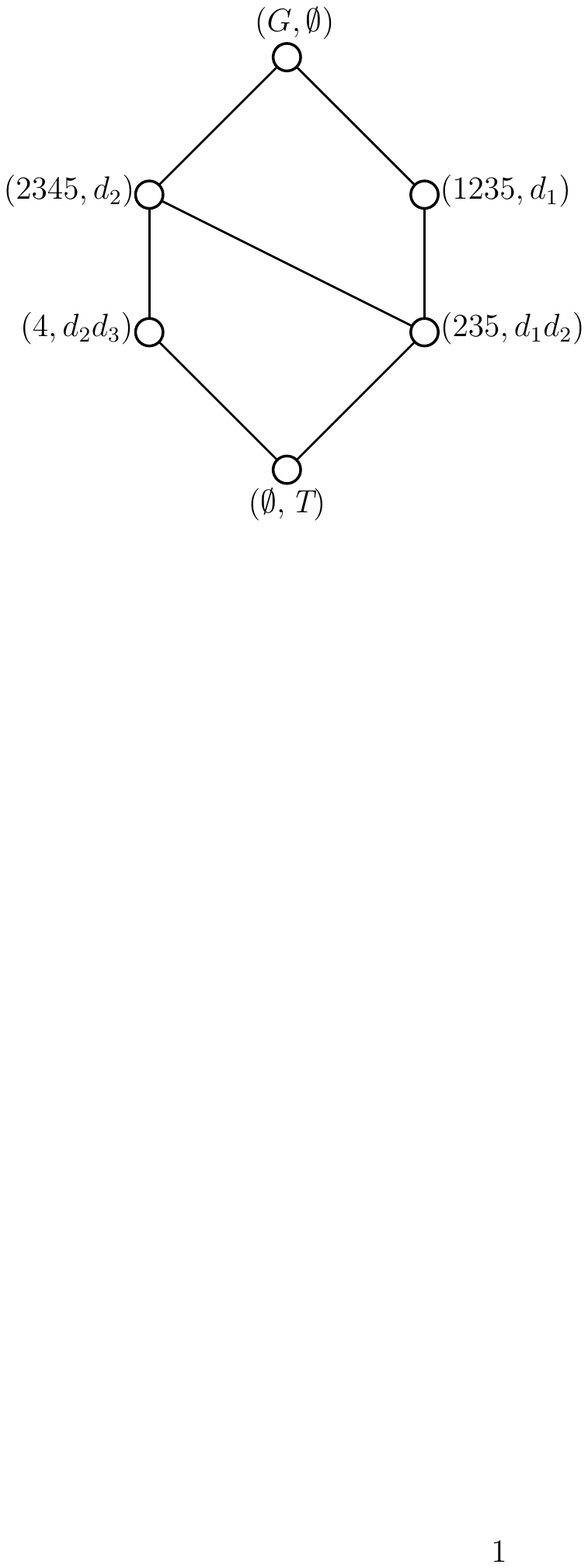}
\caption{Concept lattice $L(\mathfrak{C}_{N})$}
\small
\label{fig2}
\end{figure}
\end{example}

Theorem 3.5 presents an approach to compute necessary I-decision rules via an equivalence relation on $ExtL_{O}(\mathfrak{C}_{M})$.
Actually, decision rules can also be derived based on a classification of formal concepts in $L(\mathfrak{C}_{N})$.
Let $R_{2}$ be a binary relation on $ExtL(\mathfrak{C}_{N})$ given by:
\begin{eqnarray}
R_{2}=\{(O,Y)\in ExtL(\mathfrak{C}_{N})\times ExtL(\mathfrak{C}_{N})|
O^{\square_{M}}=Y^{\square_{M}}\}
\end{eqnarray}
In other words, $(O,Y)\in R_{2}$ equivalent to $O$ and $Y$ are extents in $ExtL(\mathfrak{C}_{N})$ and they generate
same object-oriented concepts in $L_{O}(\mathfrak{C}_{M})$. $R_{2}$ is clearly an equivalence relation on $ExtL(\mathfrak{C}_{N})$.
For each $O\in ExtL(\mathfrak{C}_{N})$ we denote $[O]_{R_{2}}$ as the
equivalence class based on $R_{2}$.
\begin{theorem}\label{thm4}
For a Fdc $\mathfrak{C}=(U,M,I,N,J)$, we have
\begin{eqnarray}
\mathfrak{\overline{R}}_{I}(\mathfrak{C})=\{(O^{\square_{M}\lozenge_{M}}, O^{\square_{M}})
\rightarrow (\cap [O]_{R_{2}}, (\cap [O]_{R_{2}})^{\uparrow_{N}})| O\in
ExtL(\mathfrak{C}_{N}), O\neq U, O^{\square_{M}\lozenge_{M}}\neq \emptyset\}
\end{eqnarray}
\end{theorem}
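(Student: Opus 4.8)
The plan is to prove the two inclusions $H\subseteq\mathfrak{\overline{R}}_{I}(\mathfrak{C})$ and $\mathfrak{\overline{R}}_{I}(\mathfrak{C})\subseteq H$ separately, where
\[ H=\{(O^{\square_{M}\lozenge_{M}},O^{\square_{M}})\rightarrow(\cap[O]_{R_{2}},(\cap[O]_{R_{2}})^{\uparrow_{N}})\,|\,O\in ExtL(\mathfrak{C}_{N}),\ O\neq U,\ O^{\square_{M}\lozenge_{M}}\neq\emptyset\}, \]
mirroring the proof of the preceding ($R_{1}$) characterisation but with the roles of $L_{O}(\mathfrak{C}_{M})$ and $L(\mathfrak{C}_{N})$ interchanged. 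The ingredients I will use are: the observation from Section~2 that $(O^{\square_{M}\lozenge_{M}},O^{\square_{M}})\in L_{O}(\mathfrak{C}_{M})$ for every $O\subseteq U$; the meet formula $(3)$ applied in $L(\mathfrak{C}_{N})$; the inclusion $O^{\square\lozenge}\subseteq O$, the distributivity $(O_{j}\cap O_{k})^{\square}=O_{j}^{\square}\cap O_{k}^{\square}$, and the cancellation law $O^{\square\lozenge\square}=O^{\square}$ from the list of properties in Section~2; and the criterion, established above, that a member of $\mathfrak{R}_{I}(\mathfrak{C})$ is necessary exactly when it is a minimal element of $(\mathfrak{R}_{I}(\mathfrak{C}),\Rightarrow)$.

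For $H\subseteq\mathfrak{\overline{R}}_{I}(\mathfrak{C})$, I would fix $O\in ExtL(\mathfrak{C}_{N})$ with $O\neq U$ and $O^{\square_{M}\lozenge_{M}}\neq\emptyset$ and set $r_{O}=(O^{\square_{M}\lozenge_{M}},O^{\square_{M}})\rightarrow(\cap[O]_{R_{2}},(\cap[O]_{R_{2}})^{\uparrow_{N}})$. First, $r_{O}\in\mathfrak{R}_{I}(\mathfrak{C})$: the premise lies in $L_{O}(\mathfrak{C}_{M})$ by the cited fact; applying $(3)$ to $\{(Z,Z^{\uparrow_{N}})\,|\,Z\in[O]_{R_{2}}\}$ gives $\cap[O]_{R_{2}}\in ExtL(\mathfrak{C}_{N})$, and $(\cap[O]_{R_{2}})^{\square_{M}}=\cap_{Z\in[O]_{R_{2}}}Z^{\square_{M}}=O^{\square_{M}}$, so $\cap[O]_{R_{2}}$ is the least element of $[O]_{R_{2}}$; for $Z\in[O]_{R_{2}}$ one has $O^{\square_{M}\lozenge_{M}}=Z^{\square_{M}\lozenge_{M}}\subseteq Z$, hence $O^{\square_{M}\lozenge_{M}}\subseteq\cap[O]_{R_{2}}$; and the non-degeneracy conditions $O^{\square_{M}\lozenge_{M}}\neq\emptyset$, $\cap[O]_{R_{2}}\subseteq O\neq U$ hold. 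Next, to see $r_{O}$ is necessary, take any $(O_{1},C_{1})\rightarrow(Y_{1},D_{1})\in\mathfrak{R}_{I}(\mathfrak{C})$ with $(O_{1},C_{1})\rightarrow(Y_{1},D_{1})\Rightarrow r_{O}$, i.e., $O^{\square_{M}\lozenge_{M}}\subseteq O_{1}\subseteq Y_{1}\subseteq\cap[O]_{R_{2}}\subseteq O$; applying $\square_{M}$ along this chain and using $O^{\square_{M}\lozenge_{M}\square_{M}}=O^{\square_{M}}$ forces $O_{1}^{\square_{M}}=Y_{1}^{\square_{M}}=O^{\square_{M}}$. Since $(O_{1},C_{1})\in L_{O}(\mathfrak{C}_{M})$ this yields $O_{1}=C_{1}^{\lozenge_{M}}=O_{1}^{\square_{M}\lozenge_{M}}=O^{\square_{M}\lozenge_{M}}$ and $C_{1}=O^{\square_{M}}$; since $(Y_{1},D_{1})\in L(\mathfrak{C}_{N})$ and $Y_{1}^{\square_{M}}=O^{\square_{M}}$ we get $Y_{1}\in[O]_{R_{2}}$, hence $\cap[O]_{R_{2}}\subseteq Y_{1}\subseteq\cap[O]_{R_{2}}$, so $Y_{1}=\cap[O]_{R_{2}}$ and $D_{1}=(\cap[O]_{R_{2}})^{\uparrow_{N}}$. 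Thus $(O_{1},C_{1})\rightarrow(Y_{1},D_{1})=r_{O}$, so $r_{O}$ is minimal, i.e., necessary.

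For $\mathfrak{\overline{R}}_{I}(\mathfrak{C})\subseteq H$, let $(O,C)\rightarrow(Y,D)$ be a necessary I-decision rule and take $Y$ itself as the index, putting $r_{Y}=(Y^{\square_{M}\lozenge_{M}},Y^{\square_{M}})\rightarrow(\cap[Y]_{R_{2}},(\cap[Y]_{R_{2}})^{\uparrow_{N}})$. From $(O,C)\in L_{O}(\mathfrak{C}_{M})$ we have $O=C^{\lozenge_{M}}=O^{\square_{M}\lozenge_{M}}$, so $O\subseteq Y$ gives $\emptyset\neq O=O^{\square_{M}\lozenge_{M}}\subseteq Y^{\square_{M}\lozenge_{M}}$; together with $Y\in ExtL(\mathfrak{C}_{N})$ and $Y\neq U$, the first part shows $r_{Y}\in\mathfrak{\overline{R}}_{I}(\mathfrak{C})\subseteq H$. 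Moreover $O\subseteq Y^{\square_{M}\lozenge_{M}}\subseteq\cap[Y]_{R_{2}}\subseteq Y$ (the first inclusion just shown, the second as in the first part with $O$ replaced by $Y$, the third since $Y\in[Y]_{R_{2}}$), so $r_{Y}\Rightarrow(O,C)\rightarrow(Y,D)$; as $(O,C)\rightarrow(Y,D)$ is necessary and $r_{Y}\in\mathfrak{R}_{I}(\mathfrak{C})$, minimality forces $r_{Y}=(O,C)\rightarrow(Y,D)$, hence $(O,C)\rightarrow(Y,D)\in H$.

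The main work is bookkeeping rather than ideas: the delicate point is making the squeeze $O^{\square_{M}\lozenge_{M}}\subseteq O_{1}\subseteq Y_{1}\subseteq\cap[O]_{R_{2}}\subseteq O$ collapse cleanly, which hinges on combining the defining relation of $R_{2}$ with the cancellation law $O^{\square\lozenge\square}=O^{\square}$ and with the closure identities $O_{1}=O_{1}^{\square_{M}\lozenge_{M}}$ (valid for object-oriented concepts) and $Y_{1}=Y_{1}^{\uparrow_{N}\downarrow_{N}}$ (valid for formal concepts). A secondary care point is threading the two non-degeneracy constraints through both inclusions — the premise side contributes $O^{\square_{M}\lozenge_{M}}\neq\emptyset$ and the conclusion side contributes $\cap[O]_{R_{2}}\neq U$ (inherited from $O\neq U$) — which is exactly why the index set in the statement is $\{O\in ExtL(\mathfrak{C}_{N}):O\neq U,\ O^{\square_{M}\lozenge_{M}}\neq\emptyset\}$.
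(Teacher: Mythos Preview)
Your proof is correct and follows essentially the same two-inclusion strategy as the paper's own argument, with the same use of the meet formula~(3), the cancellation $O^{\square\lozenge\square}=O^{\square}$, and the minimality criterion for necessary rules. The only blemish is the phrase ``the first part shows $r_{Y}\in\mathfrak{\overline{R}}_{I}(\mathfrak{C})\subseteq H$'': the inclusion established in the first part is $H\subseteq\mathfrak{\overline{R}}_{I}(\mathfrak{C})$, not the reverse, but since $r_{Y}\in H$ by construction this still yields $r_{Y}\in\mathfrak{R}_{I}(\mathfrak{C})$ and the argument goes through unchanged.
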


\begin{proof}
(1) Firstly, we prove that $\{t_{O}| O\in
ExtL(\mathfrak{C}_{N})\}\subseteq \mathfrak{\overline{R}}_{I}(\mathfrak{C})$,
i.e., $t_{O}$ is a necessary I-decision rule for any
$O\in ExtL(\mathfrak{C}_{N})$, where $t_{O}=(O^{\square_{M}\lozenge_{M}}, O^{\square_{M}})
\rightarrow (\cap [O]_{R_{2}}, (\cap [O]_{R_{2}})^{\uparrow_{N}})$.
In fact, For any $Y\in [O]_{R_{2}}$, it follows that $(Y,Y^{\uparrow_{N}})\in L(\mathfrak{C}_{N})$.
By formula $(3)$, the infimum of $\{(Y,Y^{\uparrow_{N}})|Y\in [O]_{R_{2}}\}$ in $L(\mathfrak{C}_{N})$ is given by:
\begin{eqnarray*}
\wedge_{Y\in [O]_{R_{2}}}(Y,Y^{\uparrow_{N}})=(\cap [O]_{R_{2}},
(\cup_{Y\in [O]_{R_{2}}} Y^{\uparrow_{N}})^{\downarrow_{N}\uparrow_{N}})
\end{eqnarray*}
Consequently, $\cap [O]_{R_{2}}\in ExtL(\mathfrak{C}_{N})$. Additionally, by $(\cap [O]_{R_{2}})^{\square_{M}}=\cap_{Y\in [O]_{R_{2}}}Y^{\square_{M}}=O^{\square_{M}}$, we have $\cap [O]_{R_{2}}\in [O]_{R_{2}}$
and $\cap [O]_{R_{2}}$ is clearly the least element in $[O]_{R_{2}}$.
By $O^{\square_{M}\lozenge_{M}}=(\cap [O]_{R_{2}})^{\square_{M}\lozenge_{M}}\\ \subseteq \cap [O]_{R_{2}}$, it
follows that $t_{O}$ is a I-decision rule.

Suppose that $(O_{1},C_{1})\rightarrow (Y_{1},D_{1})\in
\mathfrak{R}_{I}(\mathfrak{C})$ and $(O_{1},C_{1})\rightarrow (Y_{1},D_{1})\Rightarrow t_{O}$.
Then we know that
$O^{\square_{M}\lozenge_{M}}\subseteq O_{1}\subseteq Y_{1}\subseteq \cap [O]_{R_{2}}$.
By $O^{\square_{M}\lozenge_{M}}\subseteq O_{1}\subseteq \cap [O]_{R_{2}}\subseteq O$ we have
$O^{\square_{M}\lozenge_{M}}\subseteq O_{1}=O_{1}^{\square_{M}\lozenge_{M}}\subseteq O^{\square_{M}\lozenge_{M}}$.
Consequently $O_{1}=O^{\square_{M}\lozenge_{M}}$ and hence $(O_{1},C_{1})=(O^{\square_{M}\lozenge_{M}}, O^{\square_{M}})$.
In addition, by $O^{\square_{M}\lozenge_{M}}\subseteq Y_{1}\subseteq \cap [O]_{R_{2}}\subseteq O$ we have
$O^{\square_{M}\lozenge_{M}}\subseteq Y_{1}=Y_{1}^{\square_{M}\lozenge_{M}}\subseteq O^{\square_{M}\lozenge_{M}}$
and thus $Y_{1}^{\square_{M}\lozenge_{M}}=O^{\square_{M}\lozenge_{M}}$. Therefore we obtain
$Y_{1}\in [O]_{R_{2}}$ and hence $\cap [O]_{R_{2}}\subseteq Y_{1}$.
We conclude that $\cap [O]_{R_{2}}=Y_{1}$ and
$(Y_{1},D_{1})=(\cap [O]_{R_{2}}, (\cap [O]_{R_{2}})^{\uparrow_{N}})$. Consequently, $t_{O}=(O_{1},C_{1})\rightarrow (Y_{1},D_{1})$
and $t_{O}$ is thus a necessary I-decision rule.

(2) Secondly, we prove that $\mathfrak{\overline{R}}_{I}(\mathfrak{C})\subseteq \{t_{O}| O\in
ExtL(\mathfrak{C}_{N})\}$.
Assume $(O,C)\rightarrow (Y,D)$ is a necessary I-decision rule. According to (1),
$(Y^{\square_{M}\lozenge_{M}}, Y^{\square_{M}})
\rightarrow (\cap [Y]_{R_{2}}, (\cap [Y]_{R_{2}})^{\uparrow_{N}})$ is a I-decision rule.
By $O\subseteq Y$ and $O\in ExtL_{O}(\mathfrak{C}_{M})$ we have $O=O^{\square_{M}\lozenge_{M}}\subseteq Y^{\square_{M}\lozenge_{M}}$
and hence $O\subseteq Y^{\square_{M}\lozenge_{M}}\subseteq \cap [Y]_{R_{2}}\subseteq Y$.
Consequently, $(Y^{\square_{M}\lozenge_{M}}, Y^{\square_{M}})
\rightarrow (\cap [Y]_{R_{2}}, (\cap [Y]_{R_{2}})^{\uparrow_{N}})\Rightarrow (O,C)\rightarrow (Y,D)$.
By $(O,C)\rightarrow (Y,D)$ is necessary, we obtain $(O,C)\rightarrow (Y,D)=(Y^{\square_{M}\lozenge_{M}}, Y^{\square_{M}})
\rightarrow (\cap [Y]_{R_{2}}, (\cap [Y]_{R_{2}})^{\uparrow_{N}})$.
Thus $((O,C)\rightarrow (Y,D))\in \{t_{O}| O\in
ExtL(\mathfrak{C}_{N})\}$. Consequently, $\mathfrak{\overline{R}}_{I}(\mathfrak{C})\subseteq \{t_{O}| O\in
ExtL(\mathfrak{C}_{N})\}$ as required.
\end{proof}

By using Theorem 3.7, we put forward Algorithm 2 to acquire necessary I-decision rules.

\begin{algorithm}
\caption{Acquisition of necessary I-decision rules}
\begin{algorithmic}

\State \textbf{Input}: A Fdc $\mathfrak{C}=(U,M,I,N,J)$.

\State \textbf{Output}: $\mathfrak{\overline{R}}_{I}(\mathfrak{C})$// the set of necessary I-decision rules.


\State 1) Construct the concept lattices $L(\mathfrak{C}_{N})$

\State 2) Compute $R_{2}$ by using formula (15)

\State 3) Compute equivalence class $[O]_{R_{2}}$ for each $O\in ExtL(\mathfrak{C}_{N})$

\State 4) Using Theorem 3.5, compute $\mathfrak{\overline{R}}_{I}(\mathfrak{C})$

\State 5) Output $\mathfrak{\overline{R}}_{I}(\mathfrak{C})$

\end{algorithmic}
\end{algorithm}

Then we analyze its time complexity. Assume that
$L(\mathfrak{C}_{N})$ is computed by using the algorithms presented in \cite{37}.
Running Step 1 for generating $L(\mathfrak{C}_{N})$ needs $O(|U||N|^{2}|L(\mathfrak{C}_{N})|)$.
The running time of Steps 2-5 is at most $O(|L_{O}(\mathfrak{C}_{M})|(|U||N|+|L(\mathfrak{C}_{N})|))$. Therefore, Algorithm 2 needs at most
$O(|U||N||L_{O}(\mathfrak{C}_{M})|+|L(\mathfrak{C}_{N})|(|U||N|^{2}+|L_{O}(\mathfrak{C}_{M})|))$.

\begin{example}\label{exmp1}
We reconsider the Fdc
$\mathfrak{C}=(U,M,I,N,J)$ given by Table 1. It follows that $[\emptyset]_{R_{2}}=\{\emptyset\}$,
$[4]_{R_{2}}=\{4\}$, $[235]_{R_{2}}=\{235\}$,
$[1235]_{R_{2}}=\{1235\}$, $[2345]_{R_{2}}=\{2345\}$,
$[U]_{R_{2}}=\{U\}$ and $\emptyset^{\square_{M}\lozenge_{M}}=\emptyset$, $4^{\square_{M}\lozenge_{M}}=4$, $235^{\square_{M}\lozenge_{M}}=35$,
$1235^{\square_{M}\lozenge_{M}}=135$, $2345^{\square_{M}\lozenge_{M}}=2345$ and $U^{\square_{M}\lozenge_{M}}=U$.
Hence, we have six necessary I-decision rules:

$(r_{1}): (\emptyset,\emptyset)\rightarrow (\emptyset,N)$

$(r_{2}): (4,f)\rightarrow (4,d_{2}d_{3})$

$(r_{3}): (35,ce)\rightarrow (235,d_{1}d_{2})$

$(r_{4}): (135,ace)\rightarrow (1235,d_{1})$

$(r_{5}): (2345,bcdef)\rightarrow (2345,d_{2})$

$(r_{6}): (U,M)\rightarrow (U,\emptyset)$\\
They are the same as the necessary I-decision rules derived in Example 3.6.
\end{example}

Algorithm 1 and Algorithm 2 are all designed for computing necessary I-decision rules. Algorithm 1 is constructed by
using an equivalence relation $R_{1}$ generated by $ExtL_{O}(\mathfrak{C}_{M})$ and Algorithm 2 is based on an equivalence relation
$R_{2}$ on $ExtL(\mathfrak{C}_{N})$. In general, if $|N|\leq |M|$ and $|ExtL(\mathfrak{C}_{N})|\leq |ExtL_{O}(\mathfrak{C}_{M})|$, then Algorithm 2 is more effective than Algorithm 1. Otherwise, we tend to acquire necessary I-decision rules by using Algorithm 1.

Ren et al. \cite{33} proposed an algorithm to acquire necessary I-decision rules. The time complexity of the
algorithm is
\begin{eqnarray*}
O((|U|+|M|)|M||L_{O}(\mathfrak{C}_{M})|+(|U|+|N|)|N||L(\mathfrak{C}_{N})|\\
+|U||L_{O}(\mathfrak{C}_{M})||L(\mathfrak{C}_{N})|(|L_{O}(\mathfrak{C}_{M})|+|L(\mathfrak{C}_{N})|))
\end{eqnarray*}
Clearly, Algorithm 1 and Algorithm 2 presented in this subsection have lower time complexity than the algorithm presented in \cite{33}.
If necessary I-decision rules are computed by using Algorithm 1, we just need to compute $(O^{\uparrow_{N}\downarrow_{N}},O^{\uparrow_{N}})$ in $L(\mathfrak{C}_{N})$ for $O\in ExtL_{O}(\mathfrak{C}_{M})$.Then we need not to compute the whole concept lattice $L(\mathfrak{C}_{N})$. Similarly, If necessary I-decision rules are computed by using Algorithm 2, $L_{O}(\mathfrak{C}_{M})$ need not to be computed. However, if necessary I-decision rules are computed by using algorithm
presented in \cite{33}, $L_{O}(\mathfrak{C}_{M})$ and $L(\mathfrak{C}_{N})$ are all needed to be computed.

\subsection{Attribute reduction based on I-decision rules}

In this subsection, we present an attribute reduction method for Fdc which preserve I-decision rules.

Let $\mathfrak{C}=(U,M,I,N,J)$ be a Fdc, $\mathfrak{C}_{M}=(U,M,I)$, $\mathfrak{C}_{N}=(U,N,J)$ and
$E\subseteq M$. A Fdc $\mathfrak{C}(E)=(U,E,I_{E},N,J)$ generates by $E$ and $I_{E}=I\cap (U\times E)$,
called a subcontext of $\mathfrak{C}$.
In order to distinguish, the operators given by
(7) and (8) for $(U,E,I_{E})$ will be expressed as $\square_{E}$ and $\lozenge_{E}$ respectively. In other words, $\forall O\subseteq U$, $\forall C\subseteq E$,
we know $O^{\square_{E}}=\{m\in E|\forall x\in U((x,m)\in I\rightarrow x\in O)\}$,
$C^{\lozenge_{E}}=\{g\in U|\exists m\in C((g,m)\in I)\}$.
Obviously it follows $O^{\square_{E}}=O^{\square_{M}}\cap E$ and $C^{\lozenge_{E}}=C^{\lozenge_{M}}$.

\begin{definition}\label{def1}
Assume that $\mathfrak{C}=(U,M,I,N,J)$ is a Fdc, $E\subseteq M$, $\mathfrak{C}(E)=(U,E,I_{E},N,J)$ is the formal decision subcontext
of $\mathfrak{C}$,
$(O,C)\rightarrow (Y,D)\in \mathfrak{R}_{I}(\mathfrak{C}(E))$,
$(O_{1},C_{1})\rightarrow (Y_{1},D_{1})\in \mathfrak{R}_{I}(\mathfrak{C})$. If $O_{1}\subseteq
O\subseteq Y\subseteq Y_{1}$, we call $(O,C)\rightarrow (Y,D)$ imply $(O_{1},C_{1})\rightarrow (Y_{1},D_{1})$, denoted by $(O,C)\rightarrow (Y,D)\Rightarrow
(O_{1},C_{1})\rightarrow (Y_{1},D_{1})$.

If for any $(O_{1},C_{1})\rightarrow (Y_{1},D_{1})\in \mathfrak{R}_{I}(\mathfrak{C})$, there
exists $(O,C)\rightarrow (Y,D)\in \mathfrak{R}_{I}(\mathfrak{C}(E))$ such that
$(O,C)\rightarrow (Y,D)\Rightarrow (O_{1},C_{1})\rightarrow (Y_{1},D_{1})$, then we
say that $\mathfrak{R}_{I}(\mathfrak{C}(E))$ can imply $\mathfrak{R}_{I}(\mathfrak{C})$, denoted by
$\mathfrak{R}_{I}(\mathfrak{C}(E))\Rightarrow \mathfrak{R}_{I}(\mathfrak{C})$.
\end{definition}

\begin{definition}\label{def1}
Let $E\subseteq M$. We call $E$ a I-consistent set of $\mathfrak{C}$ if
$\mathfrak{R}_{I}(\mathfrak{C}(E))\Rightarrow \mathfrak{R}_{I}(\mathfrak{C})$.
In addition, if $E$ is a I-consistent set and
$\forall H\subset E$ is not a I-consistent set of $\mathfrak{C}$, then E is
regarded as a I-reduction of $\mathfrak{C}$.
\end{definition}

From the definition, a I-reduction of $\mathfrak{C}$ is a minimal subset $E$ of conditional attributes such
that the I-decision rules obtained from $\mathfrak{C}$ can be implied by that of $\mathfrak{R}_{I}(\mathfrak{C}(E))$.
In this case, the decision information associated with $\mathfrak{C}$ can be deduced from that of $\mathfrak{C}(E)$.

\begin{theorem}\label{thm5}
Let $E\subseteq M$.
$E$ is a I-consistent set of $\mathfrak{C}$ iff for any $(Y,D)\in
L(\mathfrak{C}_{N})$ and $(O,C)\in L_{O}(\mathfrak{C}_{M})$ with $O\subseteq Y$, there exists $(O',C')\in L_{O}(U,E,I_{E})$ such that $O\subseteq O'\subseteq Y$.
\end{theorem}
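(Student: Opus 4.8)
The plan is to prove the two directions of the equivalence separately, relying on the characterization of necessary I-decision rules (Theorem 3.5 or 3.7) together with Definition 3.9 and Definition 3.10. The key observation is that, since $\mathfrak{R}_{I}(\mathfrak{C}(E))\Rightarrow \mathfrak{R}_{I}(\mathfrak{C})$ is required to hold for \emph{all} rules of $\mathfrak{R}_{I}(\mathfrak{C})$, it suffices to test it on the necessary I-decision rules of $\mathfrak{C}$, because any I-decision rule of $\mathfrak{C}$ is implied by a necessary one in $\mathfrak{R}_{I}(\mathfrak{C})$, and $\Rightarrow$ is transitive (Theorem 3.3). Moreover, an I-decision rule $(O',C')\rightarrow(Y',D')$ of $\mathfrak{C}(E)$ implies $(O_1,C_1)\rightarrow(Y_1,D_1)$ of $\mathfrak{C}$ exactly when $O_1\subseteq O'\subseteq Y'\subseteq Y_1$; so the whole condition boils down to sandwiching chains of extents between an object-oriented extent of $(U,E,I_E)$ and the given object-oriented / formal extents.

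For the forward direction, suppose $E$ is a I-consistent set, and take $(Y,D)\in L(\mathfrak{C}_N)$, $(O,C)\in L_O(\mathfrak{C}_M)$ with $O\subseteq Y$. First I would note that $O\neq\emptyset$ and $Y\neq U$ may be assumed after reducing to the necessary rule generated by $O$ (via Theorem 3.5), or — if $O=\emptyset$ or $Y=U$ — handle these degenerate cases directly since $(\emptyset,\emptyset^{\square_E})\in L_O(U,E,I_E)$ always works as $O'$ when $O=\emptyset$, and similarly $O'=O$, $Y=U$ works when $Y=U$ (using canonicity to check membership in $L_O$). In the main case, the necessary I-decision rule $(\cup[O]_{R_1},(\cup[O]_{R_1})^{\square_M})\rightarrow(O^{\uparrow_N\downarrow_N},O^{\uparrow_N})$ belongs to $\mathfrak{R}_{I}(\mathfrak{C})$, and $O\subseteq Y$ gives $O^{\uparrow_N\downarrow_N}\subseteq Y$; I-consistency then yields $(O'',C'')\rightarrow(Y'',D'')\in\mathfrak{R}_{I}(\mathfrak{C}(E))$ with $\cup[O]_{R_1}\subseteq O''\subseteq Y''\subseteq O^{\uparrow_N\downarrow_N}\subseteq Y$, and $O\subseteq \cup[O]_{R_1}\subseteq O''$. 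Taking $(O',C')=(O'',C'')\in L_O(U,E,I_E)$ gives the desired $O\subseteq O'\subseteq Y$.

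For the converse, assume the sandwiching condition and let $(O_1,C_1)\rightarrow(Y_1,D_1)\in\mathfrak{R}_{I}(\mathfrak{C})$ be arbitrary, so $(O_1,C_1)\in L_O(\mathfrak{C}_M)$, $(Y_1,D_1)\in L(\mathfrak{C}_N)$, $O_1\subseteq Y_1$, $O_1\neq\emptyset$, $Y_1\neq U$. Applying the hypothesis with $(Y,D)=(Y_1,D_1)$ and $(O,C)=(O_1,C_1)$ produces $(O',C')\in L_O(U,E,I_E)$ with $O_1\subseteq O'\subseteq Y_1$. Here the subtlety is that the rule premise of $\mathfrak{R}_{I}(\mathfrak{C}(E))$ must use an object-oriented concept of $(U,E,I_E)$ \emph{and} the conclusion an appropriate formal concept of $\mathfrak{C}_N$: I would take the conclusion to be the formal concept $((O')^{\uparrow_N\downarrow_N},(O')^{\uparrow_N})\in L(\mathfrak{C}_N)$, which is legitimate since $O'\subseteq Y_1$ forces $(O')^{\uparrow_N\downarrow_N}\subseteq Y_1^{\uparrow_N\downarrow_N}=Y_1\neq U$, and $O'\supseteq O_1\neq\emptyset$, so $(O',C')\rightarrow((O')^{\uparrow_N\downarrow_N},(O')^{\uparrow_N})$ is a genuine I-decision rule of $\mathfrak{C}(E)$. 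Then $O_1\subseteq O'\subseteq (O')^{\uparrow_N\downarrow_N}\subseteq Y_1$ shows this rule implies $(O_1,C_1)\rightarrow(Y_1,D_1)$ in the sense of Definition 3.9, whence $\mathfrak{R}_{I}(\mathfrak{C}(E))\Rightarrow\mathfrak{R}_{I}(\mathfrak{C})$.

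The main obstacle I anticipate is the bookkeeping around $L_O(U,E,I_E)$ versus $L(\mathfrak{C}_N)$: one must be careful that $C'=(O')^{\square_E}$ really makes $(O',C')$ an object-oriented concept of the \emph{subcontext}, and that forming $(O')^{\uparrow_N\downarrow_N}$ stays within $L(\mathfrak{C}_N)$ and respects the non-triviality constraints $O'\neq\emptyset$, $(O')^{\uparrow_N\downarrow_N}\neq U$. The degenerate cases $O=\emptyset$ and $Y=U$ also need a separate (easy) check against the requirement, since the necessary-rule generators exclude them; handling them via the canonicity assumption keeps the argument clean.
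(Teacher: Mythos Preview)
Your proposal is correct but takes a noticeably more roundabout path than the paper. For the forward direction, the paper simply notes that $(O,C)\rightarrow(Y,D)$ is itself a I-decision rule of $\mathfrak{C}$ (when $O\neq\emptyset$, $Y\neq U$); I-consistency then immediately supplies $(O_1,C_1)\rightarrow(Y_1,D_1)\in\mathfrak{R}_I(\mathfrak{C}(E))$ with $O\subseteq O_1\subseteq Y_1\subseteq Y$, and one takes $O'=O_1$. There is no need to pass through Theorem~3.5 or to replace $(O,C)\rightarrow(Y,D)$ by the necessary rule it generates. For the converse, the paper reuses the \emph{same} conclusion $(Y,D)$: since $O'\subseteq Y$, the implication $(O',C')\rightarrow(Y,D)$ already lies in $\mathfrak{R}_I(\mathfrak{C}(E))$ and implies $(O,C)\rightarrow(Y,D)$, so constructing the tighter conclusion $((O')^{\uparrow_N\downarrow_N},(O')^{\uparrow_N})$ is unnecessary. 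Your route buys nothing extra here and incurs added bookkeeping; the paper's direct argument avoids invoking the structure of necessary rules entirely.

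One small slip: your handling of the degenerate case $Y=U$ by ``taking $O'=O$'' is not justified, since $O\in ExtL_O(\mathfrak{C}_M)$ need not lie in $ExtL_O(U,E,I_E)$. The paper, for its part, does not address the cases $O=\emptyset$ or $Y=U$ explicitly either, tacitly treating only pairs that form genuine I-decision rules.
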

\begin{proof}
Suppose $E$ is a I-consistent set of $\mathfrak{C}$, $(Y,D)\in L(\mathfrak{C}_{N})$,
$(O,C)\in L_{O}(\mathfrak{C}_{M})$ and $O\subseteq Y$.
We have
$(O,C)\rightarrow (Y,D)$ is a I-decision rule and hence there
exists $(O_{1},C_{1})\rightarrow (Y_{1},D_{1})\in \mathfrak{R}_{I}(\mathfrak{C}(E))$ such that
$(O_{1},C_{1})\rightarrow (Y_{1},D_{1})\Rightarrow (O,C)\rightarrow (Y,D)$. By Definition 5 we have
$O\subseteq O_{1}\subseteq Y_{1}\subseteq Y$ and consequently
$O\subseteq O_{1}\subseteq Y$ with $(O_{1},C_{1})\in L_{O}(U,E,I_{E})$.

Conversely, assume that $(O,C)\rightarrow (Y,D)\in
\mathfrak{R}_{I}(\mathfrak{C})$. We have $(Y,D)\in L(\mathfrak{C}_{N})$,
$(O,C)\in L_{O}(\mathfrak{C}_{M})$
and $O\subseteq Y$. From the assumption, there exists
$(O',C')\in L_{O}(U,E,I_{E})$ such that $O\subseteq O'\subseteq Y$. From
$O'\subseteq Y$, we know $(O',C')\rightarrow (Y,D)\in
\mathfrak{R}_{I}(\mathfrak{C}(E))$ and $(O',C')\rightarrow (Y,D)\Rightarrow
(O,C)\rightarrow (Y,D)$. We can conclude
$\mathfrak{R}_{I}(\mathfrak{C}(E))\Rightarrow \mathfrak{R}_{I}(\mathfrak{C})$ and $E$ is a I-consistent set of $\mathfrak{C}$.
\end{proof}

\begin{theorem}\label{thm6}
Let $E\subseteq M$. $E$ is a I-consistent set of $\mathfrak{C}$ iff $Y^{\square_{M}\lozenge_{M}}=Y^{\square_{E}\lozenge_{E}}$
for any $Y\in ExtL(\mathfrak{C}_{N})$.
\end{theorem}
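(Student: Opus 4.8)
The plan is to reduce the condition ``$E$ is a I-consistent set'' to the purely lattice-theoretic criterion supplied by Theorem~3.11, namely: for every $(Y,D)\in L(\mathfrak{C}_{N})$ and every $(O,C)\in L_{O}(\mathfrak{C}_{M})$ with $O\subseteq Y$ there is an object-oriented concept $(O',C')$ of $(U,E,I_{E})$ with $O\subseteq O'\subseteq Y$. The key observation is that among all extents of $L_{O}(U,E,I_{E})$ contained in a fixed $Y$, the largest one is $Y^{\square_{E}\lozenge_{E}}$ (this follows because $(Y^{\square_{E}\lozenge_{E}}, Y^{\square_{E}})\in L_{O}(U,E,I_{E})$, and for any object-oriented concept $(O',C')$ of $(U,E,I_{E})$ with $O'\subseteq Y$ one has $O'=O'^{\square_{E}\lozenge_{E}}\subseteq Y^{\square_{E}\lozenge_{E}}$ by monotonicity). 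Dually, $Y^{\square_{M}\lozenge_{M}}$ is the largest extent of $L_{O}(\mathfrak{C}_{M})$ contained in $Y$ (with $Y\in ExtL(\mathfrak{C}_{N})$ ensuring $Y^{\square_{M}\lozenge_{M}}\subseteq Y$, since $Y^{\square_{M}}\subseteq Y^{\uparrow_{N}\downarrow_{N}\square_{M}}$... more simply $Y^{\square_{M}\lozenge_{M}}\subseteq Y^{\lozenge_{M}\square_{M}}$-type inclusion from property~3, giving $Y^{\square_{M}\lozenge_{M}}\subseteq Y$). Also note $Y^{\square_{E}\lozenge_{E}}\subseteq Y^{\square_{M}\lozenge_{M}}$ always holds because $O^{\square_{E}}=O^{\square_{M}}\cap E\subseteq O^{\square_{M}}$ and $C^{\lozenge_{E}}=C^{\lozenge_{M}}$, combined with monotonicity.

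For the forward direction, suppose $E$ is I-consistent and fix $Y\in ExtL(\mathfrak{C}_{N})$, say $Y=D^{\downarrow_{N}}$ with $(Y,D)\in L(\mathfrak{C}_{N})$. Apply Theorem~3.11 with the object-oriented concept $(O,C)=(Y^{\square_{M}\lozenge_{M}},Y^{\square_{M}})\in L_{O}(\mathfrak{C}_{M})$, which indeed satisfies $O\subseteq Y$. We obtain $(O',C')\in L_{O}(U,E,I_{E})$ with $Y^{\square_{M}\lozenge_{M}}\subseteq O'\subseteq Y$; since $O'\subseteq Y$ forces $O'\subseteq Y^{\square_{E}\lozenge_{E}}$, we get $Y^{\square_{M}\lozenge_{M}}\subseteq Y^{\square_{E}\lozenge_{E}}$. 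Combined with the always-true reverse inclusion $Y^{\square_{E}\lozenge_{E}}\subseteq Y^{\square_{M}\lozenge_{M}}$, this yields equality.

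For the converse, assume $Y^{\square_{M}\lozenge_{M}}=Y^{\square_{E}\lozenge_{E}}$ for all $Y\in ExtL(\mathfrak{C}_{N})$, and verify the criterion of Theorem~3.11. Given $(Y,D)\in L(\mathfrak{C}_{N})$ and $(O,C)\in L_{O}(\mathfrak{C}_{M})$ with $O\subseteq Y$: since $O=O^{\square_{M}\lozenge_{M}}$ and $O\subseteq Y$, monotonicity gives $O\subseteq Y^{\square_{M}\lozenge_{M}}=Y^{\square_{E}\lozenge_{E}}$. Set $(O',C')=(Y^{\square_{E}\lozenge_{E}},Y^{\square_{E}})\in L_{O}(U,E,I_{E})$; then $O\subseteq O'$, and $O'=Y^{\square_{E}\lozenge_{E}}\subseteq Y^{\square_{M}\lozenge_{M}}\subseteq Y$ (or directly $O'\subseteq Y$ via the $\square\lozenge$-deflation property together with $Y\in ExtL(\mathfrak{C}_{N})$). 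So $O\subseteq O'\subseteq Y$ with $(O',C')\in L_{O}(U,E,I_{E})$, and Theorem~3.11 gives that $E$ is I-consistent.

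The step I expect to require the most care is pinning down why $Y^{\square_{M}\lozenge_{M}}\subseteq Y$ whenever $Y\in ExtL(\mathfrak{C}_{N})$ --- this is not a formal consequence of property~3 alone (which only gives $X^{\square\lozenge}\subseteq X$ for the $\square,\lozenge$ of a \emph{single} context, but here $\square_{M},\lozenge_{M}$ are the conditional-context operators applied to an extent $Y$ coming from the \emph{decision} context). In fact the inclusion $Y^{\square_{M}\lozenge_{M}}\subseteq Y$ holds for \emph{every} $Y\subseteq U$ by property~3 applied with the operators $\square_{M},\lozenge_{M}$ on $\mathfrak{C}_{M}$ (reading $O_{i}^{\square\lozenge}\subseteq O_{i}$ with $O_i=Y$), so no hypothesis on $Y$ is even needed; the role of $Y\in ExtL(\mathfrak{C}_{N})$ is only to make the quantification in Theorem~3.11 and in the statement range over the same objects. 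I would double-check this reading of property~3 and the identities $O^{\square_{E}}=O^{\square_{M}}\cap E$, $C^{\lozenge_{E}}=C^{\lozenge_{M}}$ before writing the final argument, since the whole proof rests on $Y^{\square_{E}\lozenge_{E}}$ being the largest $E$-extent below $Y$ and on its monotone relationship to $Y^{\square_{M}\lozenge_{M}}$.
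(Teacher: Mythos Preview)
Your proof is correct and follows essentially the same route as the paper: both directions reduce to Theorem~3.11, apply it in the forward direction with $(O,C)=(Y^{\square_{M}\lozenge_{M}},Y^{\square_{M}})$ to sandwich a $Z\in ExtL_{O}(U,E,I_{E})$ between $Y^{\square_{M}\lozenge_{M}}$ and $Y$, and in the converse take $O'=Y^{\square_{E}\lozenge_{E}}$ as the witnessing extent. Your worry about $Y^{\square_{M}\lozenge_{M}}\subseteq Y$ is correctly resolved---it is indeed a direct instance of property~3 for arbitrary $Y\subseteq U$, with no hypothesis on $Y$ needed.
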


\begin{proof}
Suppose $E$ is a I-consistent set of $\mathfrak{C}$ and $Y\in ExtL(\mathfrak{C}_{N})$.
It follows that $Y^{\square_{M}\lozenge_{M}}\in ExtL_{O}(\mathfrak{C}_{M})$ and $Y^{\square_{M}\lozenge_{M}}\subseteq Y$.
From Theorem 3.11,
there exists $Z\in ExtL_{O}(U,E,I_{E})$ such that $Y^{\square_{M}\lozenge_{M}}\subseteq Z\subseteq Y$.
Since
$Y^{\square_{E}\lozenge_{E}}=(Y^{\square_{M}}\cap E)^{\lozenge_{E}}=(Y^{\square_{M}}\cap E)^{\lozenge_{M}}\subseteq Y^{\square_{M}\lozenge_{M}}$,
we obtain
\begin{eqnarray*}
Y^{\square_{M}\lozenge_{M}}\subseteq Z=Z^{\square_{E}\lozenge_{E}}\subseteq Y^{\square_{E}\lozenge_{E}}\subseteq Y^{\square_{M}\lozenge_{M}}
\end{eqnarray*}
and thus $Y^{\square_{M}\lozenge_{M}}=Y^{\square_{E}\lozenge_{E}}$ as required.

Conversely, assume that $(O,C)\in L_{O}(\mathfrak{C}_{M})$, $(Y,D)\in L(\mathfrak{C}_{N})$ and $O\subseteq Y$.
By the assumption and $Y\in ExtL(\mathfrak{C}_{N})$ we obtain $Y^{\square_{M}\lozenge_{M}}=Y^{\square_{E}\lozenge_{E}}$.
Therefore, $O=O^{\square_{M}\lozenge_{M}}\subseteq Y^{\square_{M}\lozenge_{M}}=Y^{\square_{E}\lozenge_{E}}\subseteq Y$.
Thus $O\subseteq Y^{\square_{E}\lozenge_{E}}\subseteq Y$ with $Y^{\square_{E}\lozenge_{E}}\in ExtL_{O}(U,E,I_{E})$. From Theorem 3.11, $E$ is a I-consistent set of $\mathfrak{C}$.
\end{proof}

Let $\mathfrak{U}_{I}(\mathfrak{C})=\{Y^{\square_{M}\lozenge_{M}}| Y\in ExtL(\mathfrak{C}_{N})\}$.
From Theorem 3.7, $O\in \mathfrak{U}_{I}(\mathfrak{C})$ iff $(O,O^{\square_{M}})$ is the premise of a necessary I-decision rule.
For any $(O_{1},C_{1}), (O_{2},C_{2})\in L_{O}(\mathfrak{C}_{M})$, let $\beta
((O_{1},C_{1}), (O_{2},C_{2}))$ be the condition $O_{1}\in
\mathfrak{U}_{I}(\mathfrak{C})\wedge (O_{2},C_{2})\prec (O_{1},C_{1})$ or $O_{2}\in
\mathfrak{U}_{I}(\mathfrak{C})\wedge (O_{1},C_{1})\prec (O_{2},C_{2})$ and
\begin{eqnarray*}
D_{I}((O_{1},C_{1}), (O_{2},C_{2}))=\left\{\begin{array}{ll}C_{1}\cup
C_{2}-C_{1}\cap C_{2}, & \mbox{if}\quad \beta
((O_{1},C_{1}), (O_{2},C_{2})), \\
\emptyset,&\mbox{otherwise}.
\end{array} \right.
\end{eqnarray*}
where $(O_{2},C_{2})\prec (O_{1},C_{1})$ means that $(O_{2},C_{2})$ is a direct sub-concept of $(O_{1},C_{1})$, i.e.,
$(O_{2},C_{2})\leq (O_{1},C_{1})$, $(O_{2},C_{2})\neq (O_{1},C_{1})$ and $(O_{2},C_{2})\leq (O_{3},C_{3}) \leq (O_{1},C_{1})$
implies $(O_{2},C_{2})=(O_{3},C_{3})$ or $(O_{3},C_{3})=(O_{1},C_{1})$.
\begin{theorem}\label{thm8}
Let $E\subseteq M$. $E$ is a I-consistent set of $\mathfrak{C}$ iff for any
$(O_{i},C_{i}), (O_{j},C_{j})\in L_{O}(\mathfrak{C}_{M})$, if $D_{I}((O_{i},C_{i}),
(O_{j},C_{j}))\neq \emptyset$, then $E\cap D_{I}((O_{i},C_{i}),
(O_{j},C_{j}))\neq \emptyset$.
\end{theorem}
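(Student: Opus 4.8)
The plan is to reduce this discernibility-matrix characterization to the criterion already established in Theorem 3.12, namely that $E$ is a I-consistent set if and only if $Y^{\square_M\lozenge_M}=Y^{\square_E\lozenge_E}$ for every $Y\in ExtL(\mathfrak{C}_N)$. By the remark following Theorem 3.12, the extents $Y^{\square_M\lozenge_M}$ are exactly the elements of $\mathfrak{U}_I(\mathfrak{C})$, i.e. the premises (extents) of necessary I-decision rules. So I would first restate the goal: $E$ is I-consistent iff for every $O\in\mathfrak{U}_I(\mathfrak{C})$ we have $O^{\square_E\lozenge_E}=O$ (note $O^{\square_M\lozenge_M}=O$ since $O=Y^{\square_M\lozenge_M}$ is already a fixpoint). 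Then I would observe the general inclusion $O^{\square_E\lozenge_E}\subseteq O^{\square_M\lozenge_M}=O$, which holds because $O^{\square_E}=O^{\square_M}\cap E\subseteq O^{\square_M}$ and $\lozenge_E=\lozenge_M$ on subsets of $E$ together with monotonicity of $\lozenge_M$. Hence I-consistency is equivalent to the reverse inclusion $O\subseteq O^{\square_E\lozenge_E}$ for all $O\in\mathfrak{U}_I(\mathfrak{C})$.

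Next I would translate the inclusion $O\subseteq O^{\square_E\lozenge_E}$ into a separation condition phrased on the object-oriented concept lattice $L_O(\mathfrak{C}_M)$. The key is that in the full lattice $O=O^{\square_M\lozenge_M}$ and $(O,O^{\square_M})\in L_O(\mathfrak{C}_M)$, and $O\subseteq O^{\square_E\lozenge_E}$ fails precisely when there is some object $x\in O$ that belongs to the extent of the immediately smaller object-oriented concept but whose presence in $O$ is "witnessed" only by attributes of $O^{\square_M}$ lying outside $E$. Concretely, for a direct sub-concept $(O_j,C_j)\prec(O,O^{\square_M})$, the difference $O\setminus O_j$ is nonempty and each $x$ in it is distinguished from $O_j$ by the attribute set $C_1\cup C_2-C_1\cap C_2 = O^{\square_M}\setminus C_j$ (since $C_j=O_j^{\square_M}\subsetneq O^{\square_M}$, this symmetric difference is just $O^{\square_M}\setminus C_j$). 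The claim is that $O\subseteq O^{\square_E\lozenge_E}$ holds iff for every such direct sub-concept the set $E$ intersects $D_I((O,O^{\square_M}),(O_j,C_j)) = O^{\square_M}\setminus C_j$. This is where the bulk of the argument goes: showing that keeping at least one attribute from each such "one-step" difference set is enough to recover $O$ via $\square_E\lozenge_E$, and conversely that losing an entire difference set collapses $O$ down to a proper sub-extent. The forward direction uses that $O^{\square_E}$ retains, for each direct sub-concept below $O$, an attribute absent from $C_j$, so that $\lozenge_E$ applied to $O^{\square_E}$ still covers all of $O$; the converse uses that if $E\cap(O^{\square_M}\setminus C_j)=\emptyset$ for some $j$ then $O^{\square_E}\subseteq C_j$, whence $O^{\square_E\lozenge_E}\subseteq C_j^{\lozenge_M}=O_j\subsetneq O$.

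I would then assemble the biconditional. For the "only if" direction: assume $E$ is I-consistent; take any $(O_i,C_i),(O_j,C_j)\in L_O(\mathfrak{C}_M)$ with $D_I((O_i,C_i),(O_j,C_j))\neq\emptyset$; by definition of $\beta$ this means one of them — say $(O_i,C_i)$ — has $O_i\in\mathfrak{U}_I(\mathfrak{C})$ and $(O_j,C_j)\prec(O_i,C_i)$; by Theorem 3.12 and the reduction above, $O_i\subseteq O_i^{\square_E\lozenge_E}$; by the lattice analysis this forces $E\cap(C_i\setminus C_j)\neq\emptyset$, and since $C_i\setminus C_j = D_I((O_i,C_i),(O_j,C_j))$ we are done. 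For the "if" direction: assume the intersection condition holds for all pairs; fix $O\in\mathfrak{U}_I(\mathfrak{C})$, equivalently $(O,O^{\square_M})$ with $O$ a premise extent; for each direct sub-concept $(O_j,C_j)\prec(O,O^{\square_M})$ the pair satisfies $\beta$ and $D_I\neq\emptyset$, hence $E$ meets each $O^{\square_M}\setminus C_j$; by the forward part of the lattice analysis $O\subseteq O^{\square_E\lozenge_E}$, so $O^{\square_M\lozenge_M}=O=O^{\square_E\lozenge_E}$ for all $Y$-images, and Theorem 3.12 gives I-consistency.

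The main obstacle I anticipate is the lattice-theoretic lemma in the middle: proving that $O\subseteq O^{\square_E\lozenge_E}$ is controlled exactly by the collection of \emph{direct} sub-concepts of $(O,O^{\square_M})$ rather than all sub-concepts. The clean way is to show that an object $x\in O$ satisfies $x\notin O^{\square_E\lozenge_E}$ iff $x\notin O_j$ for the unique-or-some direct sub-concept responsible, combined with $x^{\uparrow_M}\cap O^{\square_M}\subseteq C_j$ — i.e. all of $x$'s attributes inside $O^{\square_M}$ already lie in $C_j$ — which can only be broken by keeping an attribute of $O^{\square_M}\setminus C_j$ in $E$. One must be careful that $\mathfrak{C}$ is canonical (so singletons behave) and that the symmetric difference $C_1\cup C_2 - C_1\cap C_2$ genuinely simplifies to $O^{\square_M}\setminus C_j$ under the comparability $(O_j,C_j)\prec(O,O^{\square_M})$, i.e. $C_j\subsetneq O^{\square_M}$; this uses the order relation on $L_O$ recalled in Section 2. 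Once that lemma is in hand, the rest is the bookkeeping described above.
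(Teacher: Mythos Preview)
Your overall architecture is the same as the paper's: both directions are reduced to Theorem~3.12, and the work is to relate the fixpoint condition $O^{\square_M\lozenge_M}=O^{\square_E\lozenge_E}$ (for $O\in\mathfrak{U}_I(\mathfrak{C})$) to intersections of $E$ with the sets $O^{\square_M}\setminus C_j$ coming from direct sub-concepts $(O_j,C_j)\prec(O,O^{\square_M})$. Your necessity argument and the ``converse'' half of your lattice lemma (if $E\cap(O^{\square_M}\setminus C_j)=\emptyset$ then $O^{\square_E\lozenge_E}\subseteq O_j\subsetneq O$) are correct and essentially what the paper does.

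The gap is in the ``forward'' half of your lattice lemma. The sentence ``$O^{\square_E}$ retains, for each direct sub-concept below $O$, an attribute absent from $C_j$, so that $\lozenge_E$ applied to $O^{\square_E}$ still covers all of $O$'' is not a proof: knowing $O^{\square_E}\not\subseteq C_j$ for each $j$ does not, by any direct element-wise reasoning, force $(O^{\square_E})^{\lozenge_M}\supseteq O$. Your suggested element-wise route (characterising $x\notin O^{\square_E\lozenge_E}$ via some $C_j$ with $x^{\uparrow_M}\cap O^{\square_M}\subseteq C_j$) is muddled and does not hold as stated, because the condition $x\notin O^{\square_E\lozenge_E}$ is $x^{\uparrow_M}\cap O^{\square_M}\cap E=\emptyset$, which has no a~priori connection to any particular direct sub-concept. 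What you are missing---and what the paper uses---is the structural fact that $O^{\square_E\lozenge_E}=(O^{\square_M}\cap E)^{\lozenge_M}$ is itself an extent of $L_O(\mathfrak{C}_M)$ (since $(D^{\lozenge_M},D^{\lozenge_M\square_M})\in L_O(\mathfrak{C}_M)$ for any $D\subseteq M$). Once you have that, the argument is immediate by contradiction: if $O^{\square_E\lozenge_E}\subsetneq O$ then it lies below some direct sub-concept $(O_j,C_j)\prec(O,O^{\square_M})$, hence $O^{\square_E}\subseteq O^{\square_E\lozenge_M\square_M}\subseteq C_j$, i.e.\ $E\cap(O^{\square_M}\setminus C_j)=\emptyset$. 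Replace your element-wise plan with this one-line observation and the proof goes through exactly as in the paper.
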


\begin{proof}
Necessity. Assume that $(O_{i},C_{i}),
(O_{j},C_{j})\in L_{O}(\mathfrak{C}_{M})$ and $D_{I}((O_{i},C_{i}),
(O_{j},C_{j}))\neq \emptyset$. It follows that $\beta
((O_{i},C_{i}), (O_{j},C_{j}))$ holds. Without losing generality, we suppose that $O_{i}\in
\mathfrak{U}_{I}(\mathfrak{C})\wedge (O_{j},C_{j})\prec (O_{i},C_{i})$.
By $O_{i}\in \mathfrak{U}_{I}(\mathfrak{C})$, there exists $Y\in ExtL(\mathfrak{C}_{N})$ such that
$O_{i}=Y^{\square_{M}\lozenge_{M}}$ and hence $C_{i}=O_{i}^{\square_{M}}=Y^{\square_{M}}$.
Since $E$ is a I-consistent set, we obtain $Y^{\square_{M}\lozenge_{M}}=Y^{\square_{E}\lozenge_{E}}$ from Theorem 3.12. Therefore, from
$(E\cap C_{i})^{\lozenge_{M}}=(E\cap C_{i})^{\lozenge_{E}}=(E\cap Y^{\square_{M}})^{\lozenge_{E}}=Y^{\square_{E}\lozenge_{E}}=Y^{\square_{M}\lozenge_{M}}=O_{i}$,
we have $(E\cap C_{j})^{\lozenge_{M}}\subseteq C_{j}^{\lozenge_{M}}=O_{j}\subset O_{i}=(E\cap C_{i})^{\lozenge_{M}}$ and thus
$E\cap C_{i}\neq E\cap C_{j}$. Therefore $E\cap
(C_{i}\cup C_{j}-C_{i}\cap C_{j})\neq \emptyset$. That is $E\cap
D_{I}((O_{i},C_{i}), (O_{j},C_{j}))\neq \emptyset$.

Sufficiency. By Theorem 3.12, it suffices to prove that $O^{\square_{E}\lozenge_{E}}=O^{\square_{M}\lozenge_{M}}$ for any $O\in ExtL(\mathfrak{C}_{N})$.
If there exists $O\in ExtL(\mathfrak{C}_{N})$ such that $O^{\square_{E}\lozenge_{E}}\neq O^{\square_{M}\lozenge_{M}}$, then $O^{\square_{E}\lozenge_{E}}\subset O^{\square_{M}\lozenge_{M}}$
by $O^{\square_{E}\lozenge_{E}}\subseteq O^{\square_{M}\lozenge_{M}}$.
By combining the facts $(O^{\square_{E}\lozenge_{E}}, O^{\square_{E}\lozenge_{E}\square_{M}})=(O^{\square_{E}\lozenge_{M}}, O^{\square_{E}\lozenge_{M}\square_{M}})\in L_{O}(\mathfrak{C}_{M})$,
$(O^{\square_{M}\lozenge_{M}},\\ O^{\square_{M}})\in L_{O}(\mathfrak{C}_{M})$ and $O^{\square_{E}\lozenge_{E}}\subset O^{\square_{M}\lozenge_{M}}$,
we obtain
$(O^{\square_{E}\lozenge_{E}}, O^{\square_{E}\lozenge_{E}\square_{M}})<(O^{\square_{M}\lozenge_{M}}, O^{\square_{M}})$.
It follows that there exists $(O_{i},C_{i})\in L_{O}(\mathfrak{C}_{M})$ such that $(O^{\square_{E}\lozenge_{E}}, O^{\square_{E}\lozenge_{E}\square_{M}})\leq (O_{i},C_{i})\prec (O^{\square_{M}\lozenge_{M}}, \\ O^{\square_{M}})$.
Consequently, $O^{\square_{E}\lozenge_{E}\square_{M}}\subseteq C_{i}\subset O^{\square_{M}}$.
By the assumption, we obtain $E\cap (O^{\square_{M}}-C_{i})\neq\emptyset$ and therefore $E\cap (O^{\square_{M}}-O^{\square_{E}\lozenge_{E}\square_{M}})\neq \emptyset$.
Hence there exists $e\in E$ such that $e\in O^{\square_{M}}$ and $e\notin O^{\square_{E}\lozenge_{E}\square_{M}}$.
Consequently, $e\in E\cap O^{\square_{M}}=O^{\square_{E}}$. This contradicts the fact that
$O^{\square_{E}}\subseteq O^{\square_{E}\lozenge_{M}\square_{M}}=O^{\square_{E}\lozenge_{E}\square_{M}}$. Thus $O^{\square_{E}\lozenge_{E}}=O^{\square_{M}\lozenge_{M}}$ for each $O\in ExtL_{O}(\mathfrak{C}_{M})$ and $E$ is a I-consistent set.
\end{proof}

For any $(O_{1},C_{1}), (O_{2},C_{2})\in L_{O}(\mathfrak{C}_{M})$,
$D_{I}((O_{1},C_{1}), (O_{2},C_{2}))$ is conditional attributes set which discerns
$(O_{1},C_{1})$ and $(O_{2},C_{2})$. In what follows,
\begin{eqnarray*}
f=\bigwedge_{D_{I}((O_{1},C_{1}), (O_{2},C_{2}))\neq \emptyset}\bigvee
D_{I}((O_{1},C_{1}), (O_{2},C_{2}))
\end{eqnarray*}
is called the discernibility function of $\mathfrak{C}$.
Here each attribute in $D_{I}((O_{1},C_{1}), (O_{2},C_{2}))$ is taken as a Boolean variable
and $f$ is a CNF (conjunctive normal form) formula in classical propositional logic.
By the technique of attribute reduction proposed in \cite{38},
we can get the theorem for computing reductions as follows.

\begin{theorem}\label{thm9}
For a Fdc $\mathfrak{C}=(U,M,I,N,J)$, if the
minimal disjunctive normal form of the discernibility function of
$\mathfrak{C}$ is $f=\bigvee_{i=1}^{t}\bigwedge_{j=1}^{s_{i}} b_{i,j}$,
then $\{E_{i}; 1\leq i\leq t\}$ is the family of all I-reductions of
$\mathfrak{C}$, where $E_{i}=\{b_{i,j}; 1\leq j\leq s_{i}\}$ for any
$1\leq i\leq t$.
\end{theorem}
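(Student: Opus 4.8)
The plan is to recast the statement as the standard correspondence between the minimal transversals of a set system and the prime implicants of a monotone Boolean function, the only nontrivial input being the discernibility characterization already obtained in Theorem 3.13. First I would package Theorem 3.13 in set-system form. Let $\mathcal{D}$ denote the family of all nonempty sets $D_{I}((O_{i},C_{i}),(O_{j},C_{j}))$ with $(O_{i},C_{i}),(O_{j},C_{j})\in L_{O}(\mathfrak{C}_{M})$. Theorem 3.13 then reads: $E\subseteq M$ is a I-consistent set of $\mathfrak{C}$ iff $E\cap D\neq\emptyset$ for every $D\in\mathcal{D}$, i.e., iff $E$ is a transversal (hitting set) of $\mathcal{D}$. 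Since a I-reduction is by Definition 3.10 exactly a minimal I-consistent set, the I-reductions of $\mathfrak{C}$ are precisely the minimal transversals of $\mathcal{D}$.

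Next I would link transversals to the discernibility function $f=\bigwedge_{D\in\mathcal{D}}\bigvee_{b\in D}b$. Regarding each $b\in M$ as a Boolean variable, and writing $v_{E}$ for the valuation assigning true exactly to the variables in $E\subseteq M$, a clause $\bigvee_{b\in D}b$ is true under $v_{E}$ iff $E\cap D\neq\emptyset$; hence $f(v_{E})=1$ iff $E$ is a transversal of $\mathcal{D}$ iff (by the previous paragraph) $E$ is I-consistent. Moreover $f$ is monotone, so $E\subseteq E'$ and $f(v_{E})=1$ force $f(v_{E'})=1$ (equivalently, a superset of a transversal is a transversal); thus the I-consistent sets form an up-set among subsets of $M$, and the I-reductions are exactly the minimal sets $E$ with $f(v_{E})=1$.

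Finally I would invoke the technique of \cite{38}, namely the classical fact that for a monotone Boolean function $f$ the conjuncts $\bigwedge_{j=1}^{s_{i}}b_{i,j}$ of its minimal disjunctive normal form $f=\bigvee_{i=1}^{t}\bigwedge_{j=1}^{s_{i}}b_{i,j}$ are precisely the prime implicants of $f$, and a variable set is a prime implicant iff it is minimal with the property of making $f$ true. Combining the three steps, $\{E_{i}:1\le i\le t\}$ with $E_{i}=\{b_{i,j}:1\le j\le s_{i}\}$ is exactly the family of minimal I-consistent sets, i.e., of all I-reductions of $\mathfrak{C}$: each $E_{i}$ is a prime implicant of $f$, hence a minimal I-consistent set, hence a I-reduction; and conversely any I-reduction $E$ is a minimal I-consistent set, hence minimal with $f(v_{E})=1$, hence a prime implicant, hence one of the $E_{i}$ (the minimal DNF listing every prime implicant exactly once).

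I do not expect a genuine obstacle here, since the construction is the Skowron--Rauszer discernibility-function argument. The one point that needs care is the chain of equivalences ``$f(v_{E})=1$ $\Leftrightarrow$ $E$ is a transversal of $\mathcal{D}$ $\Leftrightarrow$ $E$ is I-consistent'': one must check that the quantification over all pairs $(O_{i},C_{i}),(O_{j},C_{j})$ in Theorem 3.13 matches the clauses of $f$ (pairs with $D_{I}=\emptyset$ contributing no clause), together with the monotonicity bookkeeping that converts ``minimal satisfying assignment'' into ``prime implicant of the minimal DNF''.
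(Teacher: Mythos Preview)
Your proposal is correct and follows exactly the approach the paper takes: the paper does not give a detailed proof but simply states that the theorem follows ``by the technique of attribute reduction proposed in \cite{38}'' after establishing Theorem~3.13, and your outline is precisely the Skowron--Rauszer argument that \cite{38} encapsulates. In particular, your three-step decomposition (I-consistent sets $=$ transversals of $\mathcal{D}$ via Theorem~3.13; transversals $=$ satisfying assignments of the monotone CNF $f$; minimal satisfying assignments $=$ prime implicants $=$ conjuncts of the minimal DNF) is the standard content of that reference, so there is nothing to add.
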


\begin{example}
We reconsider the Fdc $\mathfrak{C}=(U,M,I,N,J)$
given by Table 1. We can conclude $\mathfrak{U}_{I}(\mathfrak{C})=\{\emptyset,4,35,135,2345,U\}$.
The discernibility matrices are given by Table 2 and
Table 3.
\begin{table}[htbp]
\caption{The discernibility matrix} {\begin{tabular}{lccccccccc}
  \hline
&$(\emptyset,\emptyset)$ & $(3,e)$ & $(4,f)$& $(24,df)$ & $(34,ef)$& $(35,ce)$& $(135,ace)$\\
  \hline
   $(4,f)$ & $f$ &  &  &  &  &  & \\
   $(35,ce)$ &  & $c$ &  &  &  &  & \\
   $(135,ace)$ &  &  &  &  &  & $a$ &  \\
   $(2345,bcdef)$ &  &  &  &  &  &  & \\
   $(U,M)$ &  &  &  &  &  &  & \\
  \hline
\end{tabular}}
\end{table}

\begin{table}[htbp]
\caption{The discernibility matrix} {\begin{tabular}{lccccccccc}
  \hline
& $(234,def)$ & $(245,bdf)$& $(345,cef)$ & $(1345,acef)$& $(2345,bcdef)$& $(U,M)$\\
  \hline
   $(4,f)$ &  &  &  &  &  & \\
   $(35,ce)$ &  &  &  &  &  & \\
   $(135,ace)$ &  &  &  &  &  & \\
   $(2345,bcdef)$ & $bc$ & $ce$ & $bd$ &  &  & \\
   $(U,M)$ &  &  &  & $bd$ & $a$ & \\
  \hline
\end{tabular}}
\end{table}
The discernibility function of $\mathfrak{C}$ is

$f=f\wedge c\wedge a\wedge (b\vee c)\wedge (c\vee e)\wedge (b\vee d)=f\wedge c\wedge a\wedge (b\vee d)$

$=(a\wedge b\wedge c\wedge f)\vee (a\wedge c\wedge d\wedge f)$

Therefore, there are two I-reductions: $\{a,b,c,f\}$ and
$\{a,c,d,f\}$.
\end{example}

\section{II-decision rule acquisition and related attribute reduction}
In what follows, we consider another type of decision rules which is generated by object-oriented concept and property-oriented
concept.
\begin{definition}\label{def1}
In a Fdc $\mathfrak{C}=(U,M,I,N,J)$, for any $(O,C)\in L_{O}(\mathfrak{C}_{M})$ and $(Y,D)\in
L_{P}(\mathfrak{C}_{N})$ with $O\subseteq Y$, $(O,C)\rightarrow (Y,D)$ is said to be a
II-decision rule of $\mathfrak{C}$.
\end{definition}

Assume $(O,C)\rightarrow (Y,D)$ is a $II$-decision rule. By the definition, $(O,C)$ is an object-oriented concept
generated by conditional context $\mathfrak{C}_{M}=(U,M,I)$ while $(Y,D)$ is a property-oriented concept
in decision context $\mathfrak{C}_{N}=(U,T,J)$. From
$C^{\lozenge_{M}}=O\subseteq Y=D^{\square_{N}}$, we can conclude if an
object $x\in U$ has some conditional attributes in $C$, then $x\in C^{\lozenge_{M}}\subseteq D^{\square_{N}}$
and hence the decision attributes had by $x$ are all in $D$.
We denote
$\mathfrak{R}_{II}(\mathfrak{C})$ as the set of all II-decision rules of $\mathfrak{C}$.

\begin{definition}\label{def1}
For $(O_{1},C_{1})\rightarrow (Y_{1},D_{1})\in \mathfrak{R}_{II}(\mathfrak{C})$,
$(O_{2},C_{2})\rightarrow (Y_{2},D_{2})\in \mathfrak{R}_{II}(\mathfrak{C})$, if $O_{2}\subseteq
O_{1}\subseteq Y_{1}\subseteq Y_{2}$, then we say that $(O_{2},C_{2})\rightarrow (Y_{2},D_{2})$
can be implied by $(O_{1},C_{1})\rightarrow (Y_{1},D_{1})$ and denote this
implication relationship by $(O_{1},C_{1})\rightarrow (Y_{1},D_{1})\Rightarrow (O_{2},C_{2})\rightarrow (Y_{2},D_{2})$.
\end{definition}

Assume that $(O_{1},C_{1})\rightarrow (Y_{1},D_{1})\Rightarrow (O_{2},C_{2})\rightarrow (Y_{2},D_{2})$.
By $O_{2}\subseteq O_{1}\subseteq Y_{1}\subseteq Y_{2}$, it follows that $C_{2}=O_{2}^{\square_{M}}\subseteq O_{1}^{\square_{M}}=C_{1}$ and
$D_{1}=Y_{1}^{\lozenge_{N}}\subseteq Y_{2}^{\lozenge_{N}}=D_{2}$.
We conclude that the decision
information associated with $(O_{2},C_{2})\rightarrow (Y_{2},D_{2})$ can be inferred
from $(O_{1},C_{1})\rightarrow (Y_{1},D_{1})$.

Let $(O,C)\rightarrow (Y,D)\in \mathfrak{R}_{II}(\mathfrak{C})$.
If there exists $(O_{1},C_{1})\rightarrow (Y_{1},D_{1})\in
\mathfrak{R}_{II}(\mathfrak{C})-\{(O,C)\rightarrow (Y,D)\}$ such that
$(O_{1},C_{1})\rightarrow (Y_{1},D_{1})\Rightarrow (O,C)\rightarrow
(Y,D)$, then $(O,C)\rightarrow (Y,D)$ is called a redundant rule in
$\mathfrak{R}_{II}(\mathfrak{C})$. Otherwise, $(O,C)\rightarrow (Y,D)$ is referred to as a necessary rule in $\mathfrak{R}_{II}(\mathfrak{C})$. We denote
$\mathfrak{\overline{R}}_{II}(\mathfrak{C})$ as the set of all necessary
II-decision rules.

In order to acquire necessary II-decision rules, we define binary relation $S_{1}$ on
$ExtL_{O}(\mathfrak{C}_{M})$ and $S_{2}$ on $ExtL_{P}(\mathfrak{C}_{N})$. Let
\begin{eqnarray}
S_{1}=\{(O,Y)\in ExtL_{O}(\mathfrak{C}_{M})\times ExtL_{O}(\mathfrak{C}_{M})|
O^{\lozenge_{N}}=Y^{\lozenge_{N}}\}
\end{eqnarray}
\begin{eqnarray}
S_{2}=\{(O,Y)\in ExtL_{P}(\mathfrak{C}_{N})\times ExtL_{P}(\mathfrak{C}_{N})|
O^{\square_{M}}=Y^{\square_{M}}\}
\end{eqnarray}
Clearly, $S_{1}$ and $S_{2}$ are all equivalence relations.
We denote $[O]_{S_{1}}$ as the
equivalence class based on $O$ for each $O\in ExtL_{O}(\mathfrak{C}_{M})$ and
by $[O]_{S_{2}}$ the
equivalence class based on $O$ for each $O\in ExtL_{P}(\mathfrak{C}_{N})$ respectively.
The following theorems
present approaches to derive necessary II-decision rules.

\begin{theorem}\label{thm4}
For a Fdc $\mathfrak{C}=(U,M,I,N,J)$, we have
\begin{eqnarray}
\mathfrak{\overline{R}}_{II}(\mathfrak{C})=\{(\cup [O]_{S_{1}},
(\cup [O]_{S_{1}})^{\square_{M}})\rightarrow (O^{\lozenge_{N}\square_{N}},O^{\lozenge_{N}})| O\in
ExtL_{O}(\mathfrak{C}_{M})\}
\end{eqnarray}
\end{theorem}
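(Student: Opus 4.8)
The plan is to imitate the two-inclusion argument used for Theorem~3.5, with the role of $L(\mathfrak{C}_N)$ now taken over by the property-oriented concept lattice $L_P(\mathfrak{C}_N)$. Write $H$ for the set on the right-hand side of the asserted identity, and for $O\in ExtL_O(\mathfrak{C}_M)$ set $s_O=(\cup [O]_{S_1},(\cup [O]_{S_1})^{\square_M})\rightarrow(O^{\lozenge_N\square_N},O^{\lozenge_N})$. Part~(1) is to show $H\subseteq\mathfrak{\overline{R}}_{II}(\mathfrak{C})$, i.e.\ that each $s_O$ is a necessary II-decision rule; part~(2) is the reverse inclusion $\mathfrak{\overline{R}}_{II}(\mathfrak{C})\subseteq H$.

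For part~(1) I would first check that $s_O$ is a genuine II-decision rule. Each $Z\in[O]_{S_1}$ gives the object-oriented concept $(Z,Z^{\square_M})\in L_O(\mathfrak{C}_M)$, so by the join formula~(12) for $L_O$ we get $\vee_{Z\in[O]_{S_1}}(Z,Z^{\square_M})=(\cup[O]_{S_1},(\cup_{Z\in[O]_{S_1}}Z^{\square_M})^{\lozenge_M\square_M})$, whence $\cup[O]_{S_1}\in ExtL_O(\mathfrak{C}_M)$ and $(\cup[O]_{S_1},(\cup[O]_{S_1})^{\square_M})\in L_O(\mathfrak{C}_M)$. Additivity of $\lozenge_N$ over finite unions (property~5) together with the definition of $S_1$ gives $(\cup[O]_{S_1})^{\lozenge_N}=\cup_{Z\in[O]_{S_1}}Z^{\lozenge_N}=O^{\lozenge_N}$, so $\cup[O]_{S_1}\in[O]_{S_1}$ is the greatest member of its $S_1$-class and, applying $\square_N$, $(\cup[O]_{S_1})^{\lozenge_N\square_N}=O^{\lozenge_N\square_N}$. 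Since $\cup[O]_{S_1}\subseteq(\cup[O]_{S_1})^{\lozenge_N\square_N}=O^{\lozenge_N\square_N}$ by property~3, and $(O^{\lozenge_N\square_N},O^{\lozenge_N})\in L_P(\mathfrak{C}_N)$, we obtain $s_O\in\mathfrak{R}_{II}(\mathfrak{C})$. For necessity, suppose $(O_1,C_1)\rightarrow(Y_1,D_1)\in\mathfrak{R}_{II}(\mathfrak{C})$ with $(O_1,C_1)\rightarrow(Y_1,D_1)\Rightarrow s_O$, i.e.\ $\cup[O]_{S_1}\subseteq O_1\subseteq Y_1\subseteq O^{\lozenge_N\square_N}$. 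From $O\subseteq Y_1\subseteq O^{\lozenge_N\square_N}$, applying $\lozenge_N\square_N$ and using $Y_1=Y_1^{\lozenge_N\square_N}$ forces $Y_1=O^{\lozenge_N\square_N}$, hence $(Y_1,D_1)=(O^{\lozenge_N\square_N},O^{\lozenge_N})$; from $O\subseteq O_1\subseteq O^{\lozenge_N\square_N}$, applying $\lozenge_N$ and using $O^{\lozenge_N\square_N\lozenge_N}=O^{\lozenge_N}$ (property~4) forces $O_1^{\lozenge_N}=O^{\lozenge_N}$, so $O_1\in[O]_{S_1}$ and therefore $O_1=\cup[O]_{S_1}$. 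Thus $(O_1,C_1)\rightarrow(Y_1,D_1)=s_O$, so $s_O$ is a minimal element of $(\mathfrak{R}_{II}(\mathfrak{C}),\Rightarrow)$, i.e.\ necessary.

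For part~(2), let $(O,C)\rightarrow(Y,D)$ be a necessary II-decision rule. By part~(1), $s_O\in\mathfrak{R}_{II}(\mathfrak{C})$; moreover $O\subseteq Y$ with $Y=Y^{\lozenge_N\square_N}$ yields $O^{\lozenge_N\square_N}\subseteq Y$, and combining this with $O\subseteq\cup[O]_{S_1}\subseteq O^{\lozenge_N\square_N}$ shows $s_O\Rightarrow(O,C)\rightarrow(Y,D)$. Since $(O,C)\rightarrow(Y,D)$ is necessary, this forces $(O,C)\rightarrow(Y,D)=s_O\in H$, giving $\mathfrak{\overline{R}}_{II}(\mathfrak{C})\subseteq H$ and completing the proof.

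The only step I expect to require care is the bookkeeping at the start of part~(1): establishing simultaneously that $\cup[O]_{S_1}$ is an object-oriented extent of $\mathfrak{C}_M$ (via the join formula~(12)) and the maximum of its $S_1$-class (via additivity of $\lozenge_N$); once these are in hand, everything else is monotonicity plus the idempotency identities $(\cdot)^{\lozenge\square\lozenge}=(\cdot)^{\lozenge}$, $(\cdot)^{\lozenge\square\lozenge\square}=(\cdot)^{\lozenge\square}$ and the inclusions $(\cdot)^{\square\lozenge}\subseteq(\cdot)\subseteq(\cdot)^{\lozenge\square}$. It is also worth noting that, unlike the I-decision rule case, the definition of a II-decision rule imposes no $O\neq\emptyset$ or $Y\neq U$ restriction, which is why no such side conditions appear in the statement or are needed in the argument.
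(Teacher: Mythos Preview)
Your proposal is correct and follows exactly the route the paper indicates: the paper does not write out a separate proof for this theorem but states that it is proved similarly to Theorem~3.5, and your argument is precisely that adaptation, replacing $L(\mathfrak{C}_N)$ by $L_P(\mathfrak{C}_N)$, the operator pair $(\uparrow_N,\downarrow_N)$ by $(\lozenge_N,\square_N)$, and using additivity of $\lozenge_N$ (property~5) in place of the anti-additivity of $\uparrow_N$ when showing $\cup[O]_{S_1}$ remains in its $S_1$-class. Your closing remark about the absence of the side conditions $O\neq\emptyset$, $Y\neq U$ is also accurate and consistent with Definition~4.1.
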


\begin{theorem}\label{thm4}
For a Fdc $\mathfrak{C}=(U,M,I,N,J)$, we have
\begin{eqnarray}
\mathfrak{\overline{R}}_{II}(\mathfrak{C})=\{(O^{\square_{M}\lozenge_{M}}, O^{\square_{M}})
\rightarrow (\cap [O]_{S_{2}}, (\cap [O]_{S_{2}})^{\lozenge_{N}})| O\in
ExtL_{P}(\mathfrak{C}_{N})\}
\end{eqnarray}
\end{theorem}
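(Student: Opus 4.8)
The plan is to mirror, essentially verbatim, the proof of Theorem 3.7, replacing the equivalence relation $R_{2}$ on $ExtL(\mathfrak{C}_{N})$ by $S_{2}$ on $ExtL_{P}(\mathfrak{C}_{N})$, the concept lattice $L(\mathfrak{C}_{N})$ by the property-oriented concept lattice $L_{P}(\mathfrak{C}_{N})$, and the closure $\uparrow_{N}\downarrow_{N}$ on extents by the closure $\lozenge_{N}\square_{N}$ attached to $L_{P}(\mathfrak{C}_{N})$. Write $t_{O}=(O^{\square_{M}\lozenge_{M}},O^{\square_{M}})\rightarrow(\cap[O]_{S_{2}},(\cap[O]_{S_{2}})^{\lozenge_{N}})$ for $O\in ExtL_{P}(\mathfrak{C}_{N})$ and let $H=\{t_{O}\mid O\in ExtL_{P}(\mathfrak{C}_{N})\}$. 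I would establish the two inclusions $H\subseteq\mathfrak{\overline{R}}_{II}(\mathfrak{C})$ and $\mathfrak{\overline{R}}_{II}(\mathfrak{C})\subseteq H$ separately; note that, since the definition of II-decision rule carries no non-emptiness side conditions, no restrictions on $O$ appear here.

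For $H\subseteq\mathfrak{\overline{R}}_{II}(\mathfrak{C})$, the first step is to show $\cap[O]_{S_{2}}$ is the least element of $[O]_{S_{2}}$ and again lies in $ExtL_{P}(\mathfrak{C}_{N})$: each $Y\in[O]_{S_{2}}$ gives $(Y,Y^{\lozenge_{N}})\in L_{P}(\mathfrak{C}_{N})$, so the infimum formula $(9)$ yields $\bigwedge_{Y\in[O]_{S_{2}}}(Y,Y^{\lozenge_{N}})=(\cap[O]_{S_{2}},(\cap_{Y}Y^{\lozenge_{N}})^{\square_{N}\lozenge_{N}})$, whence $\cap[O]_{S_{2}}\in ExtL_{P}(\mathfrak{C}_{N})$, and $(\cap[O]_{S_{2}})^{\square_{M}}=\cap_{Y\in[O]_{S_{2}}}Y^{\square_{M}}=O^{\square_{M}}$ places it in $[O]_{S_{2}}$. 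Using $Y^{\square_{M}\lozenge_{M}}\subseteq Y$ for all $Y$ together with $Y^{\square_{M}}=O^{\square_{M}}$ on the class, one gets $O^{\square_{M}\lozenge_{M}}\subseteq\cap[O]_{S_{2}}$, so together with $(O^{\square_{M}\lozenge_{M}},O^{\square_{M}})\in L_{O}(\mathfrak{C}_{M})$ and $(\cap[O]_{S_{2}},(\cap[O]_{S_{2}})^{\lozenge_{N}})\in L_{P}(\mathfrak{C}_{N})$ we have $t_{O}\in\mathfrak{R}_{II}(\mathfrak{C})$. For necessity, suppose $(O_{1},C_{1})\rightarrow(Y_{1},D_{1})\in\mathfrak{R}_{II}(\mathfrak{C})$ with $(O_{1},C_{1})\rightarrow(Y_{1},D_{1})\Rightarrow t_{O}$, hence $O^{\square_{M}\lozenge_{M}}\subseteq O_{1}\subseteq Y_{1}\subseteq\cap[O]_{S_{2}}\subseteq O$; applying $\square_{M}\lozenge_{M}$ and using its idempotency plus $O_{1}=O_{1}^{\square_{M}\lozenge_{M}}$ forces $O_{1}=O^{\square_{M}\lozenge_{M}}$ and $C_{1}=O^{\square_{M}}$, while applying $\square_{M}$ to $O^{\square_{M}\lozenge_{M}}\subseteq Y_{1}\subseteq\cap[O]_{S_{2}}$ and using $O^{\square_{M}\lozenge_{M}\square_{M}}=O^{\square_{M}}=(\cap[O]_{S_{2}})^{\square_{M}}$ gives $Y_{1}^{\square_{M}}=O^{\square_{M}}$, so $Y_{1}\in[O]_{S_{2}}$ and therefore $Y_{1}=\cap[O]_{S_{2}}$, $D_{1}=(\cap[O]_{S_{2}})^{\lozenge_{N}}$. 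Thus $(O_{1},C_{1})\rightarrow(Y_{1},D_{1})=t_{O}$, so $t_{O}$ is necessary.

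For $\mathfrak{\overline{R}}_{II}(\mathfrak{C})\subseteq H$, take a necessary II-decision rule $(O,C)\rightarrow(Y,D)$, so $(O,C)\in L_{O}(\mathfrak{C}_{M})$, $(Y,D)\in L_{P}(\mathfrak{C}_{N})$ and $O\subseteq Y$; then $t_{Y}$ is a II-decision rule by the first step and $t_{Y}\Rightarrow(O,C)\rightarrow(Y,D)$, since $O=O^{\square_{M}\lozenge_{M}}\subseteq Y^{\square_{M}\lozenge_{M}}\subseteq\cap[Y]_{S_{2}}\subseteq Y$ (first inclusion by monotonicity and $O\in ExtL_{O}(\mathfrak{C}_{M})$, second from the first step, third because $Y\in[Y]_{S_{2}}$); necessity of $(O,C)\rightarrow(Y,D)$ then forces $(O,C)\rightarrow(Y,D)=t_{Y}\in H$. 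The only real work is careful bookkeeping with the two closure operators — $\square_{M}\lozenge_{M}$ on the $U$-side of object-oriented concepts and $\lozenge_{N}\square_{N}$ on the $U$-side of property-oriented concepts — invoking the basic identities from Section 2 (in particular $O^{\square\lozenge}\subseteq O$, $O^{\square\lozenge\square}=O^{\square}$, $O^{\square\lozenge\square\lozenge}=O^{\square\lozenge}$) and the infimum formula $(9)$ in the right direction; the main obstacle, exactly as in Theorem 3.7, is verifying that the two sandwiches and their images under $\square_{M}$ collapse, which hinges on $\cap[O]_{S_{2}}\subseteq O$ and on $\cap[O]_{S_{2}}$ being a genuine member of $ExtL_{P}(\mathfrak{C}_{N})$.
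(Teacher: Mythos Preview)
Your proposal is correct and follows exactly the route the paper itself indicates: the paper does not spell out a proof of Theorem~4.4 but states that it ``can be prove[d] \ldots\ similarly to Theorem~3.7,'' and your argument is precisely that translation, with $L_{P}(\mathfrak{C}_{N})$, $S_{2}$, and the property-oriented infimum formula~(9) replacing $L(\mathfrak{C}_{N})$, $R_{2}$, and~(3). Your observation that the absence of the side conditions $O\neq\emptyset$, $Y\neq U$ in Definition~4.1 removes the corresponding restrictions on $O$ is also accurate.
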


We can prove Theorem 4.3 and Theorem 4.4 similarly to Theorem 3.5 and Theorem 3.7 respectively.
\begin{example}
We reconsider the Fdc $\mathfrak{C}=(U,M,I,N,J)$
given by Table 1. It can be computed that
\begin{eqnarray*}
L_{O}(\mathfrak{C}_{M})=\{(\emptyset, \emptyset), (3,e), (4,f), (24,df),
(34,ef), (35,ce), (135,ace), \\
(234,def),(245,bdf), (345,cef), (1345,acef), (2345,bcdef), (U,M)\}.
\end{eqnarray*}
In addition, we have
\begin{eqnarray*}
L_{P}(\mathfrak{C}_{N})=\{(\emptyset, \emptyset), (1,d_{1}), (4,d_{2}d_{3}),
(1235,d_{1}d_{2}), (U,N)\}.
\end{eqnarray*}

We consider $S_{1}$ on $ExtL_{O}(\mathfrak{C}_{M})$.
It's routine to review that $[\emptyset]_{S_{1}}=\{\emptyset\}$,
$[3]_{S_{1}}=\{3,35,135\}$, $[4]_{S_{1}}=\{4\}$,
$[24]_{S_{1}}=\{24,34,234,345,245,1345, 2345, U\}$
and $\emptyset^{\lozenge_{N}\square_{N}}=\emptyset$, $3^{\lozenge_{N}\square_{N}}=1235$, $4^{\lozenge_{N}\square_{N}}=4$,
$24^{\lozenge_{N}\square_{N}}=U$.
Thus, by Theorem 4.3, there are four necessary II-decision rules:

$r_{1}:(\emptyset,\emptyset)\rightarrow (\emptyset,\emptyset)$

$r_{2}:(4,f)\rightarrow (4,d_{2}d_{3})$

$r_{3}:(135,ace)\rightarrow (1235,d_{1}d_{2})$

$r_{4}:(U,M)\rightarrow (U,N)$.

Now we consider $S_{2}$ on $ExtL_{P}(\mathfrak{C}_{N})$. It is routine to check that $[\emptyset]_{S_{2}}=\{1,\emptyset\}$,
$[4]_{S_{2}}=\{4\}$,
$[1235]_{S_{2}}=\{1235\}$, $[U]_{S_{2}}=\{U\}$
and $\emptyset^{\square_{M}\lozenge_{M}}=\emptyset$, $4^{\square_{M}\lozenge_{M}}=4$, $1235^{\square_{M}\lozenge_{M}}=135$, $U^{\square_{M}\lozenge_{M}}=U$.
Thus, by Theorem 4.4, we also have four necessary II-decision rules $r_{1}$, $r_{2}$, $r_{3}$ and $r_{4}$.
\end{example}

Similar to I-decision rules, in practical application, II-decision rules $(O,C)\rightarrow (Y,D)$ will be restricted by $O\neq\emptyset$ and $Y\neq U$.
In this case, we will obtain necessary II-decision rules $r_{2}$ and $r_{3}$ in Example 4.5.

For a Fdc $\mathfrak{C}=(U,M,I,N,J)$, $\mathfrak{C}^{c}=(U,M,I,N,\neg J)$ is called the complement Fdc
of $\mathfrak{C}$ where $(g,t)\in \neg J$ is determined by $(g,t)\notin J$ for any $g\in U$ and $t\in N$.

\begin{theorem}\cite{8} $\varphi: L(U,N,\neg J)\rightarrow L_{P}(U,N,J)$ is a lattice isomorphism,
where $\varphi ((O,C))=(O,M-C)$ for any $(O,C)\in L(U,N,\neg J)$.
\end{theorem}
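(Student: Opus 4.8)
The statement asserts that the map $\varphi$ sending $(O,C) \in L(U,N,\neg J)$ to $(O, N-C)$ is a lattice isomorphism onto $L_P(U,N,J)$. (Note the excerpt writes $M-C$, but since we are working inside the context $(U,N,\neg J)$ the intended complement is taken within $N$; I would state it as $N-C$ for clarity.) I would prove this in four stages: (i) $\varphi$ is well-defined, i.e. $(O,N-C)$ really is a property-oriented concept of $(U,N,J)$ whenever $(O,C)$ is a formal concept of $(U,N,\neg J)$; (ii) $\varphi$ is injective; (iii) $\varphi$ is surjective; (iv) $\varphi$ and its inverse are order-preserving, so that $\varphi$ is a lattice isomorphism.

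The computational heart is step (i), and the key is to relate the derivation operators of $(U,N,\neg J)$ to the modal operators of $(U,N,J)$. Writing $\uparrow,\downarrow$ for the operators of $(U,N,\neg J)$ and $\lozenge,\square$ for those of $(U,N,J)$, a direct unfolding of the definitions gives, for $O\subseteq U$ and $C\subseteq N$,
\begin{eqnarray*}
O^{\uparrow} &=& \{t\in N \mid \forall x\in O\,((x,t)\notin J)\} = N - O^{\lozenge},\\
C^{\downarrow} &=& \{x\in U \mid \forall t\in C\,((x,t)\notin J)\} = \{x\in U \mid x^{\uparrow_{J}}\subseteq N-C\} = (N-C)^{\square}.
\end{eqnarray*}
Now suppose $(O,C)\in L(U,N,\neg J)$, i.e. $O^{\uparrow}=C$ and $C^{\downarrow}=O$. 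From the first identity, $C = N - O^{\lozenge}$, hence $O^{\lozenge} = N-C$. From the second identity, $O = C^{\downarrow} = (N-C)^{\square}$. Together these say exactly that $(O, N-C)$ satisfies $O^{\lozenge}=N-C$ and $(N-C)^{\square}=O$, i.e. $(O,N-C)\in L_P(U,N,J)$. So $\varphi$ is well-defined, and the same two identities run backwards show that $\psi(O,D) = (O, N-D)$ maps $L_P(U,N,J)$ into $L(U,N,\neg J)$; since $\psi\circ\varphi$ and $\varphi\circ\psi$ are clearly the identity (complementation within $N$ is an involution on intents while extents are untouched), $\varphi$ is a bijection and steps (ii)–(iii) follow at once.

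For step (iv), recall the order on $L(U,N,\neg J)$ is $(O_1,C_1)\le(O_2,C_2)\iff O_1\subseteq O_2$, and likewise the order on $L_P(U,N,J)$ is given by extent inclusion (as recalled in Section 2.2). Since $\varphi$ leaves the extent coordinate fixed, $(O_1,C_1)\le(O_2,C_2)$ in $L(U,N,\neg J)$ iff $O_1\subseteq O_2$ iff $\varphi(O_1,C_1)\le\varphi(O_2,C_2)$ in $L_P(U,N,J)$, so $\varphi$ is an order isomorphism and therefore a lattice isomorphism. The main obstacle is purely the bookkeeping in step (i): one must be careful that all complements are taken inside $N$ (not $M$) and that the two duality identities relating $\uparrow/\downarrow$ of $\neg J$ to $\lozenge/\square$ of $J$ are verified from the definitions; once those are in hand, everything else is immediate. (This result is quoted from \cite{8}; the sketch above is essentially its proof.)
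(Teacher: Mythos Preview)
Your proof is correct and is essentially the standard argument for this result. Note, however, that the paper does not actually supply its own proof of this theorem: it is quoted from \cite{8} (Yao) and used as a black box to deduce Theorem~4.7, so there is no ``paper's own proof'' to compare against. Your observation that the intent complement should be taken in $N$ rather than $M$ is also correct; the $M-C$ in the statement is a typographical slip since $C\subseteq N$ throughout.
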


From this theorem, $L(U,N,\neg J)$ and $L_{P}(U,N,J)$ are isomorphic. Thus we have the next theorem.

\begin{theorem}
For a Fdc $\mathfrak{C}=(U,M,I,N,J)$, we have

(1) $\mathfrak{R}_{II}(\mathfrak{C})=\{(O,C)\rightarrow (Y,D)|(O,C)\rightarrow (Y,\neg D)\in \mathfrak{R}_{I}^{c}(\mathfrak{C})\}$

(2) $\mathfrak{\overline{R}}_{II}(\mathfrak{C})=\{(O,C)\rightarrow (Y,D)|(O,C)\rightarrow (Y,\neg D)\in \mathfrak{\overline{R}}_{I}^{c}(\mathfrak{C})\}$\\
where $\mathfrak{R}_{I}^{c}(\mathfrak{C})$ is the family of all I-decision rules and $\mathfrak{\overline{R}}_{I}^{c}(\mathfrak{C})$
is the family of all necessary I-decision rules of the complement Fdc $\mathfrak{C}^{c}$.
\end{theorem}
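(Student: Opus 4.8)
The statement to prove is the correspondence between II-decision rules of a Fdc $\mathfrak{C}=(U,M,I,N,J)$ and I-decision rules of its complement $\mathfrak{C}^{c}=(U,M,I,N,\neg J)$, for both the full family $\mathfrak{R}_{II}$ and the necessary family $\mathfrak{\overline{R}}_{II}$. The key engine is Theorem 4.7, the lattice isomorphism $\varphi\colon L(U,N,\neg J)\to L_{P}(U,N,J)$ with $\varphi((O,C))=(O,N-C)$ (the excerpt writes $M-C$, but since $C\subseteq N$ this must read $N-C$). First I would observe that this isomorphism acts as the identity on extents: a set $Y\subseteq U$ lies in $ExtL(U,N,\neg J)$ iff it lies in $ExtL_{P}(U,N,J)$, and the corresponding intents are related by complementation within $N$, i.e. $Y^{\uparrow_{N,\neg J}}=N-Y^{\lozenge_{N,J}}$ equivalently $D=N-\neg D$. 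Note also that the conditional context $(U,M,I)$ is untouched when passing to $\mathfrak{C}^{c}$, so $L_{O}(\mathfrak{C}_{M})$ and the operators $\square_{M},\lozenge_{M}$ are literally the same in $\mathfrak{C}$ and $\mathfrak{C}^{c}$.

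\medskip
\noindent For part (1), I would argue by double containment, but really it is a single chain of equivalences. Fix a pair $(O,C)\rightarrow(Y,D)$. It is a II-decision rule of $\mathfrak{C}$ iff $(O,C)\in L_{O}(\mathfrak{C}_{M})$, $(Y,D)\in L_{P}(U,N,J)$ and $O\subseteq Y$, by Definition 4.1. By Theorem 4.7, $(Y,D)\in L_{P}(U,N,J)$ iff $(Y,\neg D)\in L(U,N,\neg J)$, where I take $\neg D=N-D$ and verify $\varphi((Y,\neg D))=(Y,N-\neg D)=(Y,D)$. The premise condition $(O,C)\in L_{O}(\mathfrak{C}_{M})$ and the inclusion $O\subseteq Y$ are unchanged since they involve only the conditional context and the extents. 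Hence $(O,C)\rightarrow(Y,D)\in\mathfrak{R}_{II}(\mathfrak{C})$ iff $(O,C)\in L_{O}((\mathfrak{C}^{c})_{M})$, $(Y,\neg D)\in L((\mathfrak{C}^{c})_{N})$ and $O\subseteq Y$, which is exactly the condition $(O,C)\rightarrow(Y,\neg D)\in\mathfrak{R}_{I}^{c}(\mathfrak{C})$ from Definition 3.3 --- modulo the genericity clauses $O\neq\emptyset$, $Y\neq U$, which match on both sides because they too depend only on $O$ and $Y$. This gives (1).

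\medskip
\noindent For part (2), the point is that the implication preorder $\Rightarrow$ on $\mathfrak{R}_{II}(\mathfrak{C})$ and on $\mathfrak{R}_{I}^{c}(\mathfrak{C})$ is defined purely in terms of the extent inclusions $O_{2}\subseteq O_{1}\subseteq Y_{1}\subseteq Y_{2}$ (compare Definition 4.2 with Definition 3.2), and the bijection $(O,C)\rightarrow(Y,D)\;\leftrightarrow\;(O,C)\rightarrow(Y,\neg D)$ from part (1) preserves the pair of extents $(O,Y)$ exactly. Therefore this bijection is an order-isomorphism between $(\mathfrak{R}_{II}(\mathfrak{C}),\Rightarrow)$ and $(\mathfrak{R}_{I}^{c}(\mathfrak{C}),\Rightarrow)$, and consequently it carries minimal elements to minimal elements. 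Since, by the analogue of Theorem 3.4(2), necessary rules are precisely the minimal elements of the respective posets, we conclude $(O,C)\rightarrow(Y,D)\in\mathfrak{\overline{R}}_{II}(\mathfrak{C})$ iff $(O,C)\rightarrow(Y,\neg D)\in\mathfrak{\overline{R}}_{I}^{c}(\mathfrak{C})$, which is (2).

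\medskip
\noindent The only mildly delicate step is the bookkeeping around Theorem 4.7: one must confirm that $\varphi$ does fix extents and complements intents within $N$ (so that the map on rules is a genuine involution of the conclusion part and the inclusion $O\subseteq Y$ is literally preserved), and that the side conditions on $\mathfrak{R}_{I}$ versus $\mathfrak{R}_{II}$ --- whether or not one imposes $O\neq\emptyset$ and $Y\neq U$ --- are treated consistently on both sides. Everything else is a transcription of definitions, so no real obstacle arises.
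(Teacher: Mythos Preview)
Your proposal is correct and follows the same route the paper intends: the paper gives no explicit proof of this theorem, merely stating it as an immediate consequence of the isomorphism $\varphi\colon L(U,N,\neg J)\to L_{P}(U,N,J)$ (the preceding theorem), so your argument is precisely the natural unpacking of that remark. Your observation about the side conditions $O\neq\emptyset$, $Y\neq U$ is apt, since Definition~4.1 omits them while Definition~3.3 includes them; the paper itself acknowledges this discrepancy just after Example~4.5, and as you note the correspondence holds provided the conditions are imposed consistently.
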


\begin{definition}\label{def1}
Let $E\subseteq M$, $(O,C)\rightarrow (Y,D)\in \mathfrak{R}_{II}(\mathfrak{C}(E))$,
$(O_{1},C_{1})\rightarrow (Y_{1},D_{1})\in \mathfrak{R}_{II}(\mathfrak{C})$. If $O_{1}\subseteq
O\subseteq Y\subseteq Y_{1}$, then we call $(O,C)\rightarrow (Y,D)$ imply $(O_{1},C_{1})\rightarrow (Y_{1},D_{1})$ and denote this
implication relationship by $(O,C)\rightarrow (Y,D)\Rightarrow
(O_{1},C_{1})\rightarrow (Y_{1},D_{1})$.

If for any $(O_{1},C_{1})\rightarrow (Y_{1},D_{1})\in \mathfrak{R}_{II}(\mathfrak{C})$, there
exists $(O,C)\rightarrow (Y,D)\in \mathfrak{R}_{II}(\mathfrak{C}(E))$ such that
$(O,C)\rightarrow (Y,D)\Rightarrow (O_{1},C_{1})\rightarrow (Y_{1},D_{1})$, then we
say that $\mathfrak{R}_{II}(\mathfrak{C}(E))$ imply $\mathfrak{R}_{II}(\mathfrak{C})$, denoted by
$\mathfrak{R}_{II}(\mathfrak{C}(E))\Rightarrow \mathfrak{R}_{II}(\mathfrak{C})$.
\end{definition}

\begin{definition}\label{def1}
In a Fdc $\mathfrak{C}=(U,M,I,N,J)$, $E\subseteq M$, we call $E$ a II-consistent set of $\mathfrak{C}$ if
$\mathfrak{R}_{II}(\mathfrak{C}(E))\\ \Rightarrow \mathfrak{R}_{II}(\mathfrak{C})$.
In addition, if $E$ is a II-consistent set and $\forall H\subset E$ is not a II-consistent set of $\mathfrak{C}$, then we call E is
a II-reduction of $\mathfrak{C}$.
\end{definition}

\begin{theorem}\label{thm11}
Let $\mathfrak{C}=(U,M,I,N,J)$ be a Fdc, $E\subseteq M$.

(1) $E$ is a II-consistent set of $\mathfrak{C}$ iff $E$ is a I-consistent set of $\mathfrak{C}^{c}$.

(2) $E$ is a II-reduction of $\mathfrak{C}$ iff $E$ is a I-reduction of $\mathfrak{C}^{c}$.
\end{theorem}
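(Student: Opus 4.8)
The plan is to reduce everything to the already-proven correspondence between $II$-decision rules of $\mathfrak{C}$ and $I$-decision rules of the complement $\mathfrak{C}^{c}$, namely Theorem 4.8. The crucial observation is that the complement operation interacts correctly with passing to a subcontext: for $E\subseteq M$, the decision subcontext of $\mathfrak{C}^{c}$ determined by $E$ is $(U,E,I_{E},N,\neg J)$, which is exactly $(\mathfrak{C}(E))^{c}$. Since $I$ and $II$ rules are built from concepts of the conditional context (which is $(U,E,I_{E})$ in both cases and is unchanged by complementing only the decision relation) together with concepts of the decision context, Theorem 4.8 applied to $\mathfrak{C}(E)$ gives $\mathfrak{R}_{II}(\mathfrak{C}(E))=\{(O,C)\rightarrow(Y,D)\mid (O,C)\rightarrow(Y,\neg D)\in \mathfrak{R}_{I}((\mathfrak{C}(E))^{c})\}=\{(O,C)\rightarrow(Y,D)\mid (O,C)\rightarrow(Y,\neg D)\in \mathfrak{R}_{I}(\mathfrak{C}^{c}(E))\}$.

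For part (1), I would argue both directions by unwinding Definitions 4.6 and 4.7. Suppose $E$ is a $II$-consistent set of $\mathfrak{C}$, i.e.\ $\mathfrak{R}_{II}(\mathfrak{C}(E))\Rightarrow\mathfrak{R}_{II}(\mathfrak{C})$. Take any $I$-decision rule $(O_{1},C_{1})\rightarrow(Y_{1},\neg D_{1})$ of $\mathfrak{C}^{c}$; by Theorem 4.8 the rule $(O_{1},C_{1})\rightarrow(Y_{1},D_{1})$ lies in $\mathfrak{R}_{II}(\mathfrak{C})$, so there is $(O,C)\rightarrow(Y,D)\in\mathfrak{R}_{II}(\mathfrak{C}(E))$ with $O_{1}\subseteq O\subseteq Y\subseteq Y_{1}$; applying Theorem 4.8 to $\mathfrak{C}(E)$, $(O,C)\rightarrow(Y,\neg D)\in\mathfrak{R}_{I}(\mathfrak{C}^{c}(E))$, and the extent inclusions are exactly the condition in Definition 5 (in Section 3) for $(O,C)\rightarrow(Y,\neg D)\Rightarrow(O_{1},C_{1})\rightarrow(Y_{1},\neg D_{1})$. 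Hence $\mathfrak{R}_{I}(\mathfrak{C}^{c}(E))\Rightarrow\mathfrak{R}_{I}(\mathfrak{C}^{c})$, i.e.\ $E$ is a $I$-consistent set of $\mathfrak{C}^{c}$. The converse is symmetric, using that $(\mathfrak{C}^{c})^{c}=\mathfrak{C}$. The key point throughout is that the implication relation $\Rightarrow$ (for both $I$- and $II$-rules, both within a context and between a subcontext and the full context) is defined purely by the chain of extent inclusions $O_{1}\subseteq O\subseteq Y\subseteq Y_{1}$, which is untouched by complementing intents; thus the bijection of Theorem 4.8 is an order isomorphism for $\Rightarrow$.

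Part (2) is then immediate from part (1) together with the defining clauses of Definitions 4.7 and 4.10 (and the analogous $I$-reduction definition): $E$ is a $II$-reduction of $\mathfrak{C}$ iff $E$ is $II$-consistent for $\mathfrak{C}$ and no proper subset $H\subset E$ is, which by (1) holds iff $E$ is $I$-consistent for $\mathfrak{C}^{c}$ and no proper subset is, i.e.\ iff $E$ is a $I$-reduction of $\mathfrak{C}^{c}$. So after (1) there is essentially nothing left to do.

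I do not expect a genuine obstacle here; the only thing requiring care is the bookkeeping claim $(\mathfrak{C}(E))^{c}=\mathfrak{C}^{c}(E)$ and the fact that the conditional-side concepts $L_{O}(U,E,I_{E})$ are literally the same in $\mathfrak{C}(E)$ and in $\mathfrak{C}^{c}(E)$, so that Theorem 4.8 can be invoked for the subcontext verbatim. Once that is stated, the rest is a mechanical transfer of the inclusion chains through the isomorphism $\varphi((O,C))=(O,N-C)$ of Theorem 4.7, and I would present it compactly rather than spelling out every inclusion.
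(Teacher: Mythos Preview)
Your proposal is correct and follows essentially the same approach as the paper: translate a $I$-rule of $\mathfrak{C}^{c}$ into a $II$-rule of $\mathfrak{C}$ via the rule-correspondence theorem, apply $II$-consistency to get an implying rule in $\mathfrak{R}_{II}(\mathfrak{C}(E))$, translate back, and note that part (2) follows immediately from part (1). Your explicit remark that $(\mathfrak{C}(E))^{c}=\mathfrak{C}^{c}(E)$ is a nice piece of bookkeeping that the paper uses implicitly; just double-check your theorem numbering, since in the paper the isomorphism and the rule correspondence appear as Theorems 4.6 and 4.7 rather than 4.7 and 4.8.
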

\begin{proof}
(1) Suppose that $E$ is a II-consistent set of $\mathfrak{C}$. For each $(O,C)\rightarrow (Y,D)\in \mathfrak{R}_{I}^{c}(\mathfrak{C})$,
we have $(O,C)\rightarrow (Y,T-D)\in \mathfrak{R}_{II}(\mathfrak{C})$ and therefore there exists $(O_{1},C_{1})\rightarrow (Y_{1},D_{1})\in \mathfrak{R}_{II}(\mathfrak{C}(E))$ such that $(O_{1},C_{1})\rightarrow (Y_{1},D_{1})\Rightarrow (O,C)\rightarrow (Y,T-D)$ because of $E$ is a II-consistent set of $\mathfrak{C}$. We have $O\subseteq O_{1}\subseteq Y_{1}\subseteq Y$ and
$(O_{1},C_{1})\rightarrow (Y_{1},T-D_{1})\in \mathfrak{R}_{I}^{c}(\mathfrak{C}(E))$.
It follows that $(O_{1},C_{1})\rightarrow (Y_{1},T-D_{1})\Rightarrow (O,C)\rightarrow (Y,D)$ and
$E$ is a I-consistent set of $\mathfrak{C}^{c}$ as required.

If $E$ is a I-consistent set of $\mathfrak{C}^{c}$, then $E$ is a II-consistent set of $\mathfrak{C}$ can be proved similarly.

(2) follows directly from (1).
\end{proof}

\begin{example}
We reconsider the Fdc $\mathfrak{C}$
given by Table 1. The complement Fdc $\mathfrak{C}^{c}$
is proposed by Table 4.

\begin{table}[htbp]
\centering
\caption{Formal decision context $\mathfrak{C}^{c}$} {\begin{tabular}{lccccccccc}
  \hline
&$a$ & $b$ & $c$& $d$ & $e$& $f$& $d_{1}$& $d_{2}$& $d_{3}$\\
  \hline
   $1$ & $\times$ &  &  &  &  &  &  & $\times$ & $\times$\\
   $2$ &  & $\times$ &  & $\times$ &  &  &  &  & $\times$\\
   $3$ & $\times$ &  & $\times$ &  & $\times$ &  &  &  & $\times$\\
   $4$ &  & $\times$ &  & $\times$ &
    & $\times$ & $\times$ &  & \\
   $5$ & $\times$ & $\times$ & $\times$ &  &  &  &  &  & $\times$\\
  \hline
\end{tabular}}
\end{table}
\begin{table}[htbp]
\caption{The discernibility matrix} {\begin{tabular}{lccccccccc}
  \hline
&$(\emptyset,\emptyset)$ & $(3,e)$ & $(4,f)$& $(24,df)$ & $(34,ef)$& $(35,ce)$& $(135,ace)$\\
  \hline
   $(\emptyset,\emptyset)$ & &  &  &  &  &  & \\
   $(4,f)$ & $f$ &  &  &  &  &  & \\
   $(135,ace)$ &  &  &  &  &  & $a$ &  \\
   $(U,M)$ &  &  &  &  &  &  & \\
  \hline
\end{tabular}}
\end{table}

\begin{table}[htbp]
\caption{The discernibility matrix} {\begin{tabular}{lccccccccc}
  \hline
& $(234,def)$ & $(245,bdf)$& $(345,cef)$ & $(1345,acef)$& $(2345,bcdef)$& $(U,M)$\\
  \hline
   $(\emptyset,\emptyset)$ &  &  &  &  &  & \\
   $(4,f)$ &  &  &  &  &  & \\
   $(135,ace)$ &  &  &  &  &  & \\
   $(U,M)$ &  &  &  & $bd$ & $a$ & \\
  \hline
\end{tabular}}
\end{table}
By Example 3.6, we obtain
\begin{eqnarray*}
L_{O}(\mathfrak{C}_{M})=\{(\emptyset, \emptyset), (3,e), (4,f), (24,df),
(34,ef), (35,ce), (135,ace), \\
(234,def),(245,bdf), (345,cef), (1345,acef), (2345,bcdef), (U,M)\}.
\end{eqnarray*}
In addition, we have
\begin{eqnarray*}
L(U,T,\neg J)=\{(\emptyset, T), (1,d_{2}d_{3}), (4,d_{1}),
(1235,d_{3}), (U,\emptyset)\}.
\end{eqnarray*}

Therefore, $\mathfrak{U}_{I}(\mathfrak{C}^{c})=\{\emptyset,4,135,U\}$.
The discernibility matrices are given by Table 5 and
Table 6.

The discernibility function of $\mathfrak{C}^{c}$ is $f\wedge a\wedge (b\vee d)=(a\wedge b\wedge f)\vee(a\wedge d\wedge f)$
and $\mathfrak{C}^{c}$ has two I-reduction
$\{a,b,f\}$ and $\{a,d,f\}$. Then we can obtain $\{a,b,f\}$ and $\{a,d,f\}$ are II-reduction of $\mathfrak{C}$.
\end{example}

\section{Conclusions}
The knowledge of formal decision context is usually expressed as decision rules. We note that the existing works on knowledge discovery
of formal decision context focus mainly on decision rules derived from conditional and decision formal concepts.
This paper mainly proposes some novel methods to knowledge discovery for formal decision context based on two
new kinds of decision rules, namely I-decision rules and II-decision rules, which are generated by formal concepts, object-oriented concepts
and property-oriented concepts.
For I-decision rules, via equivalence relations on extents set of conditional (decision) concept lattices,
we develop two rule acquisition algorithms.
Some comparative analysis of these algorithms with the existing algorithms
presented in \cite{33} is conducted. It is shown that the algorithms presented in this paper have lower time complexities than
the existing ones. The attribute reduction method for formal decision context to preserve I-decision rules is presented.
For II-decision rules,
by using isomorphism between concept lattice of a formal context and property-oriented concept lattice of its complement context,
the algorithms for rule acquisition are proposed and the attribute reduction method to preserve II-decision rules is examined.

In further research, we will study attribute reduction methods for formal decision context with respect to some other types
of decision rules.
Moreover, the applications of attribute reduction approaches to three-way decision theory deserve further study.

\section*{Acknowledgements}
This work was supported by the National Natural
Science Foundation of China (Grant No. 61976130).



\begin{thebibliography}{}

\bibitem{1}
Wille R, Restructuring lattice theory: an approach based on hierarchies
  of concepts, in: I. Rival(Ed.), Ordered Sets, Reidel, Dordrecht-Boston, 1982. 445-470.

\bibitem{2}
Beydoun G, Formal concept analysis for an e-learning semantic web, Expert Syst Appl,2009.36(8)10952-10961.

\bibitem{3}
Formica A, Concept similarity in fuzzy formal concept analysis for
  semantic web, Int J Uncertain Fuzz Knowl Based Syst,2010.18(2)153-167.

\bibitem{4}
Hao F, Guangyao P, Pei Z, Qin KY, Zhang Y, Wang XM, Virtual machines scheduling in mobile edge
computing: a formal concept analysis approach, IEEE Transactions on Sustainable Computing 2019, DOI 10.1109/TSUSC.2019.2894136

\bibitem{5}
Poelmans J, Elzinga P, Viaene S, Dedene G, Formal concept analysis in
  knowledge discovery: a survey, in: International Conference on Conceptual
  Structures, 2010. 139-153.

\bibitem{6}
Zhao YX, Li JH, Liu WQ, Xu WH, Cognitive concept learning from incomplete information, Int J Mach Learn Cybern,2017. 8159-170.

\bibitem{7}
Duntsch I, Gediga G, Modal-style operators in qualitative data
  analysis, in: Proceedings of the 2002 IEEE International Conference on Data
  Miningc (ICDM'02), IEEE Computer Society, Washington DC, USA, 2002. 155-162.

\bibitem{8}
Yao YY, A comparative study of formal concept analysis and
  rough set theory in data analysis, in: International Conference on Rough Sets
  and Current Trends in Computing, Springer, 2004. 59-68.

\bibitem{9}
Yao YY, Concept lattices in rough set theory, in: Processing NAFIPS'04. IEEE Annual Meeting of the IEEE,
   2004.2: 796-801.

\bibitem{10}
Belohlavek R, Dvorak J, Outrata J, Fast factorization by similarity in formal concept analysis of data with fuzzy attributes, J. Comput. Syst.
Sci.2007. 73:1012-1022.

\bibitem{11}
Qi JJ, Wei L, Yao Y, Three-way formal concept analysis, in: D. Miao, W. Pedrycz, D. Slezak, G. Peters,
Q. Hu, R. Wang (Eds.), Rough Sets and Knowledge Technology, Lecture Notes in Artificial Intelligence, vol.8818,
Springer, Heidelberg, 2014. 732-741.

\bibitem{12}
Ganter B, Wille R, Formal Concept Analysis, Mathematical Foundations,
  Springer, Berlin, 1999.

\bibitem{13}
Konecny J, On attribute reduction in concept lattices: Methods based on
  discernibility matrix are outperformed by basic clarification and reduction,
  Inf Sci,2017.415:199-212.

\bibitem{14}
Zhang WX, Wei L, Qi J, Attribute reduction theory and approach to concept
  lattice, Sci China Ser F Inf Sci,2005.48(6):713-726.

\bibitem{15}
Wu WZ, Leung Y, Mi JS, Granular computing and knowledge reduction in
  formal contexts, IEEE Trans Knowl Data Eng,2009.21(10):1461-1474.

\bibitem{16}
Ren R, Wei L, The attribute reductions of three-way concept lattices,
  Knowl Based Syst,2016.99:92-102.

\bibitem{17}
Shao MW, Li KW, Attribute reduction in generalized one-sided formal
  contexts, Inf Sci,2017.378:317-327.

\bibitem{18}
Elloumi S, Jaam J, Hasnah A, Jaoua A, Nafkha I, A multi-level conceptual
  data reduction approach based on the lukasiewicz implication, Inf Sci,2004.163:253-262.

\bibitem{19}
Shao MW, Yang HZ, Wu WZ, Knowledge reduction in formal fuzzy contexts,
  Knowl Based Syst,2015.73:265-275.

\bibitem{20}
Cao L, Wei L, Qi JJ, Concept reduction preserving binary relations,
Pattern Recognition and Artificial Intelligence,2018.31(6):516-524.

\bibitem{21}
Dias SM, Vieira NJ, Concept lattices reduction: Definition, analysis and
classification, Expert Syst Appl,2015.42:7084-7097.

\bibitem{22}
Wang X, Zhang WX, Relations of attribute reduction between object and
  property oriented concept lattices, Knowl Based Syst,2008.21:398-403.

\bibitem{23}
Ma JM, Leung Y, Zhang WX, Attribute reductions in object-oriented concept
  lattices, Int J Mach Learn Cybern,2014.5:789-813.

\bibitem{24}
Medina J, Relating attribute reduction in formal, object-oriented and
  property-oriented concept lattices, Comput Math Appl,2012.64:1992-2002.

\bibitem{25}
Zhang WX, Qiu GF, Uncertain decision making based on rough sets,
  Publishin of Tsinghua University, Beijing, 2005.

\bibitem{26}
Kuznetsov SO, Complexity of learning in concept lattices from positive
  and negative examples, Discrete Appl Math,2004.142:111-125.

\bibitem{27}
Wei L, Qi JJ, Zhang WX, Attribute reduction theory of concept lattice based
  on decision formal contexts, Sci China Ser F Inf Sci,2008.51(7):910-923.

\bibitem{28}
Li JY, Wang X, Wu WZ, Xu YH, Attribute reduction in inconsistent formal
  decision contexts based on congruence relations, Int J Mach Learn Cybern,2017.8:81-94.

\bibitem{29}
Li JH, Kumar CA, Mei CL, Wang XZ, Comparison of reduction in formal decision contexts, Int J Approx Reason,2017.80:100-122.

\bibitem{30}
Li JH, Mei CL, Lv YJ, Knowledge reduction in decision formal contexts,
  Knowl Based Syst,2011.24:709-715.

\bibitem{31}
Li JH, Mei CL, Lv YJ, Knowledge reduction in formal decision
  contexts based on an order-preserving mapping, Int J General Syst,2012.41:143-161.

\bibitem{32}
Shao MW, Leung Y, Wu WZ, Rule acquisition and complexity reduction in
  formal decision contexts, Int J Approx Reason,2014.55:259-274.

\bibitem{33}
Ren Y, Li JH, Aswani K, Liu WQ, Rule acquisition in formal decision
  contexts based on formal, object-oriented and property-oriented concept
  lattices, Sci World J,2014.2014(8):1-10.

\bibitem{34}
Qin KY, Li B, Pei Z, Attribute reduction and rule acquisition of formal decision context based on object (property) oriented concept lattices, Int J Mach Learn Cybern,2019.10(10):2837-2850.

\bibitem{35}
Qin KY, Lin H, Jiang YT, Local attribute reductions of formal contexts, Int J Mach Learn Cybern,2020.11(1):81-93.

\bibitem{36}
Pawlak Z, Rough sets, Inter. J. of Computer and Information Sciences,1982.11:341-356.

\bibitem{37}
Outrata J, Vychodil V, Fast algorithm for computing fixpoints of Galois connections induced
by object-attribute relational data, Inf Sci,2012.185:114-127.

\bibitem{38}
Skowron A, Rauszer C, The discernibility matrices and functions in information
systems, In: R.Slowinski(Ed.), Intelligent Decision Support, Handbook of Applications and
Advances of the Rough Sets Theory, Kluwer, Academic, Dordrecht,1992.11:331-362.



\end{thebibliography}
\end{document}